\documentclass{article}
\usepackage{iclr2027_conference,times}
\newif\ifpapertmlr
\papertmlrfalse
\ifpapertmlr
  \let\PaperSansDefault\sfdefault
  \let\PaperMonoDefault\ttdefault
\fi
\usepackage[scaled=0.92]{helvet}

\usepackage[T1]{fontenc}
\usepackage[scaled=0.9]{fragmentmono}
\ifpapertmlr
  \let\sfdefault\PaperSansDefault
  \let\ttdefault\PaperMonoDefault
\fi

\usepackage{amsmath,amsfonts,bm}

\def\eqref#1{equation~\ref{#1}}

\def\1{\bm{1}}

\DeclareMathAlphabet{\mathsfit}{\encodingdefault}{\sfdefault}{m}{sl}
\SetMathAlphabet{\mathsfit}{bold}{\encodingdefault}{\sfdefault}{bx}{n}

\usepackage{amssymb}
\usepackage{pifont}
\newcommand{\correspondingmark}{%
  \text{\raisebox{0.1ex}{\scalebox{1.4}{\ding{41}}}\kern0.08em}}
\usepackage{algorithm}
\usepackage{algpseudocode}
\usepackage{amsthm}
\newtheorem{theorem}{Theorem}
\newtheorem{lemma}{Lemma}[section]

\theoremstyle{definition}

\theoremstyle{plain}

\usepackage{xcolor}
\definecolor{paperLinkBlue}{HTML}{1F4E79}
\usepackage{hyperref}
\hypersetup{colorlinks=true,allcolors=paperLinkBlue,pdfborder={0 0 0}}
\usepackage{url}
\usepackage{graphicx}
\usepackage{booktabs}
\usepackage{array} %
\usepackage{wrapfig}
\usepackage{capt-of}
\usepackage{enumitem}
\setlist{nosep}
\usepackage[caption=false]{subfig}
\usepackage{varwidth}
\newcommand{\figurepanelfont}{\normalfont\normalsize}
\newcommand{\figurepanelcaptionbox}[2][\linewidth]{%
  \begin{varwidth}{#1}%
  \figurepanelfont\raggedright\noindent#2\par
  \end{varwidth}}
\newcommand{\figurepanelcaption}[1]{%
  \par\begingroup\centering\noindent\figurepanelcaptionbox{#1}\par\endgroup}
\newcommand{\figurepanellabel}[1]{\smash{\subfloat{\label{#1}}}(\thesubfigure)}
\newcommand{\maplabelfont}{\normalfont\fontfamily{phv}\selectfont\small}
\newcommand{\maplabel}[1]{%
  \par\begingroup\maplabelfont\centering\noindent#1\par\endgroup}
\usepackage{threeparttable} %
\AddToHook{env/table/begin}{\setlength{\belowcaptionskip}{8pt}}
\AddToHook{env/table*/begin}{\setlength{\belowcaptionskip}{8pt}}
\usepackage{multicol} %
\usepackage{tikz}
\usetikzlibrary{arrows.meta,bending,positioning,fit,backgrounds,calc,shapes.geometric}
\usepackage{pgfplots}
\pgfplotsset{compat=1.18}
\usepgfplotslibrary{statistics}
\usepackage{colortbl} %
\ifpapertmlr
  \AddToHook{env/tikzpicture/begin}{%
    \renewcommand{\sfdefault}{phv}%
    \renewcommand{\ttdefault}{FragmentMonoSSZeroTwo-TLF}%
    \DeclareFontSeriesDefault[tt]{bf}{b}}
\fi
\tikzset{every picture/.append style={font=\sffamily}}
\tikzset{figure panel caption/.style={font=\figurepanelfont,align=left}}
\tikzset{map label/.style={font=\maplabelfont,align=center}}
\pgfplotsset{
  every axis/.append style={font=\sffamily},
  every axis legend/.append style={font=\sffamily},
}
\definecolor{lorrBlue}{HTML}{0072B2}
\definecolor{lorrOrange}{HTML}{E69F00}
\definecolor{lorrGreen}{HTML}{009E73}
\definecolor{lorrPurple}{HTML}{AA4499}
\definecolor{lorrRed}{HTML}{D55E00}
\definecolor{macRed}{HTML}{FF5F57}    %
\definecolor{macYellow}{HTML}{FEBC2E}
\definecolor{macGreen}{HTML}{28C840}
\newif\ifshowopus
\showopusfalse
\definecolor{claudeOrange}{HTML}{D97757}

\pgfplotsset{lorr development axis/.style={
  axis x line*=bottom,axis y line*=left,
  axis line style={draw=black!55},tick style={draw=black!55},
  label style={font=\scriptsize\sffamily},
  tick label style={font=\scriptsize\sffamily},
  xticklabel={\textsf{\pgfmathprintnumber[assume math mode=true,1000 sep={,},min exponent for 1000 sep=3]{\tick}}},
  yticklabel={\textsf{\pgfmathprintnumber[assume math mode=true,1000 sep={,},min exponent for 1000 sep=3]{\tick}}},
  scaled ticks=false,
}}
\makeatletter
\newcommand{\importasset}[1]{\@ifundefined{importasset@#1}{%
  \global\@namedef{importasset@#1}{}\input{#1}}{}}
\makeatother
\newcommand{\LorrRFiveNMS}{3150.5}
\newcommand{\LorrRFiveKK}{2065.25}

\newcommand{\LorrRFiveMean}{4253.25}
\newcommand{\LorrRFiveMeanLabel}{4,253.3}

\newcommand{\LorrRFiveXAxisMax}{45}

\newcommand{\LorrRFiveMilestoneAxisMax}{15}

\tikzset{lorr value note/.style={
  font=\scriptsize\sffamily\bfseries,
  align=center,text=black,fill=white,inner sep=1pt,
  anchor=south,yshift=4pt,
}}

\title{Policy Search in Code Space for Multi-Agent Pathfinding}

\author{Matthias Eder$^{1}$, Pantelis Emmanouil$^{2}$, Vivek Farias$^{2}$, Fabian Fischer$^{1}$, Adam Jozefiak$^{2}$, \\
\textbf{Tianyi Peng$^{3}$, Thomas Ulz$^{1}$, Raphael Watschinger$^{1}$, Fangyu Wu$^{3,}$\thanks{Corresponding author: Fangyu Wu (\texttt{fywu85@gmail.com}).}\phantom{$^{\correspondingmark}$}, Andrew Zheng$^{4}$} \\
$^{1}$KNAPP \quad $^{2}$MIT \quad $^{3}$Columbia University \quad $^{4}$University of British Columbia
}

\iclrfinalcopy
\begin{document}
\begingroup
\let\titlefnsymbol\fnsymbol
\renewcommand{\fnsymbol}[1]{%
  \ifnum\value{#1}=1\relax\correspondingmark\else\titlefnsymbol{#1}\fi}
\maketitle
\endgroup

\begin{abstract}
Developing coordination policies for complex multi-agent systems relies on either labor-intensive algorithm design or opaque task-specific reinforcement learning.
With recent advances in autonomous coding agents, we show that policy search directly in code space offers a compelling third path that combines automated optimization with explicit interpretability.
We investigate a human-directed workflow in which coding agents iteratively propose, implement, evaluate, diagnose, and eventually refine algorithms guided by simulation feedback and explicit correctness verification.
We conducted two case studies and found that this workflow robustly outperformed any manually engineered solutions.
On a challenging map in the 2024 League of Robot Runners competition, a two-day search discovered a policy that exceeds the stronger locally reproduced competition entry by 35.0\% on the densest instance and generalizes to unseen map instances.
In simulated real-world warehouses with irregular lanes, continuous robot geometry, and asynchronous motion, a week-long search discovered an entirely new class of multi-agent pathfinding algorithm that outperforms a calibrated proprietary solution in throughput by 58.9\%.
Under stated assumptions, we further prove that the new algorithm is collision-free and completes, in finite time, every accepted goal that is not withdrawn or replaced.
Together, the studies show that policy search in code space powered by state-of-the-art coding agents offers a highly competitive path to building real engineering solutions that are both effective and analytically tractable.

\end{abstract}

\newcommand{\CgarCommitCount}{50}
\newcommand{\CgarCommitElapsed}{441.091111111}
\newcommand{\CgarCommitAxisMax}{55}
\newcommand{\CgarTimeAxisMax}{450}
\newcommand{\CgarActiveElapsed}{179.881418708}
\newcommand{\CgarActiveAxisMax}{190}
\newcommand{\CgarMilestoneCount}{15}
\newcommand{\CgarMilestoneAxisMax}{16}

\newcommand{\CgarScalingRuns}{800}
\newcommand{\CgarScalingMaxFleet}{100}
\newcommand{\CgarScalingBaseline}{441}
\newcommand{\CgarScalingBaselineFleet}{65}
\newcommand{\CgarScalingMatchBaselineFleet}{26}
\newcommand{\CgarScalingMatchBaselineMean}{441.6}
\newcommand{\CgarScalingMatchBaselineShare}{40}
\newcommand{\CgarScalingMatchKnappFleet}{30}
\newcommand{\CgarScalingMatchKnappMean}{492.8}
\newcommand{\CgarScalingKnappMean}{487.9}
\newcommand{\CgarScalingPeakFleet}{68}
\newcommand{\CgarScalingPeakMean}{784.2}
\newcommand{\CgarScalingLastMean}{415.8}

\newsavebox{\GaGridFull}
\newsavebox{\GaGridMap}
\newsavebox{\GaWarehouseMap}
\sbox{\GaGridFull}{\input{assets/lorr/random-03-map.tex}}
\sbox{\GaGridMap}{\begin{tikzpicture}[x=0.195cm,y=0.195cm]
  \useasboundingbox (21,10) rectangle (31,20);
  \begin{scope}
    \clip (21.02,10.02) rectangle (30.98,19.98);
    \node[anchor=south west,inner sep=0pt,outer sep=0pt,
      xshift=-0.2746pt,yshift=-0.2746pt] at (0,0)
      {\scalebox{2.7465}{\usebox{\GaGridFull}}};
    \draw[black!18,line width=0.2pt,step=1] (21,10) grid (31,20);
  \end{scope}
  \draw[black!45,line width=0.4pt] (21,10) rectangle (31,20);
\end{tikzpicture}}
\sbox{\GaWarehouseMap}{\includegraphics[height=2.1cm]{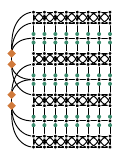}}
\newsavebox{\GaUserIcon}
\newsavebox{\GaCodeIcon}
\sbox{\GaUserIcon}{\begin{tikzpicture}
  \fill[lorrBlue] (0,0.36) circle (0.12);
  \fill[lorrBlue] (-0.22,0) arc[start angle=180,end angle=0,radius=0.22]
    -- cycle;
\end{tikzpicture}}
\sbox{\GaCodeIcon}{\begin{tikzpicture}
  \fill[white,draw=black!45,line width=0.4pt]
    (0,0) -- (0.38,0) -- (0.38,0.38) -- (0.28,0.48) -- (0,0.48) -- cycle;
  \draw[black!45,line width=0.4pt] (0.28,0.48) -- (0.28,0.38) -- (0.38,0.38);
  \foreach \y/\x/\l/\c in {0.37/0.06/0.14/lorrBlue,0.27/0.10/0.18/lorrBlue!55,
      0.17/0.10/0.12/black!35,0.07/0.06/0.22/lorrBlue}
    \draw[\c,line width=0.9pt] (\x,\y) -- ++(\l,0);
\end{tikzpicture}}

\newcommand{\GaCodeFont}{\fontsize{5}{6}\selectfont\ttfamily
  \spaceskip=\fontcharwd\font`\x\relax\xspaceskip=\spaceskip}

\pgfplotsset{ga axis/.style={
  lorr development axis,
  scale only axis,anchor=south west,
  width=3.1cm,height=1.45cm,clip=false,
  xlabel style={yshift=2pt},ylabel style={yshift=-3pt},
}}

\tikzset{
  ga row/.style={fill=black!4,draw=none,rounded corners=3pt},
  ga title/.style={font=\footnotesize\sffamily\bfseries,anchor=north west,
    inner sep=0pt},
  ga result/.style={font=\scriptsize\sffamily,text=lorrBlue!80!black,
    anchor=north west,inner sep=0pt},
  ga step/.style={font=\scriptsize\sffamily,inner sep=0.5pt,
    text height=1.6ex,text depth=0.45ex},
  ga ring/.style={lorrBlue,line width=1pt,
    -{Triangle[length=3.3pt,width=3.3pt,bend]}},
  ga code/.style={font=\GaCodeFont,anchor=west,inner sep=0pt},
  ga codeline/.style={ga code,text=white!82!black},
  ga arrow/.style={lorrBlue,line width=0.8pt,
    -{Computer Modern Rightarrow},shorten <=2pt,shorten >=2pt},
  ga block/.style={font=\scriptsize\sffamily,inner sep=0pt},
  ga number/.style={font=\Large\sffamily,inner sep=0pt},
  ga subcaption/.style={figure panel caption,anchor=north,
    inner sep=0pt},
  ga baseline/.style={font=\scriptsize\sffamily,text=black!80,fill=white,
    inner xsep=1pt,inner ysep=0pt,anchor=north east,yshift=-#1},
  ga baseline/.default=2pt,
  ga value/.style={font=\scriptsize\sffamily\bfseries,text=black,fill=white,
    inner sep=1pt},
}

\begin{center}
\makeatletter\def\@captype{figure}\makeatother
\begin{tikzpicture}

\node[ga subcaption] at (3.44,0.05) {\figurepanelcaptionbox[6.86cm]{%
\figurepanellabel{fig:graphical-abstract:lorr} League of Robot Runners 2024.
We exceed the top entry by roughly 35\% in a dense map.}};
\node at (0.95,1.4) {\scalebox{0.82}{\usebox{\GaGridMap}}};
\begin{axis}[ga axis,
  at={(3.15cm,0.85cm)},
  xlabel={Milestone Commit},ylabel={Tasks\\Completed},ylabel style={align=center},
  xmin=0,xmax=\LorrRFiveMilestoneAxisMax,ymin=0,ymax=4800,
  xtick={0,5,10,15},ytick={0,\LorrRFiveKK,\LorrRFiveNMS,4000},
  yticklabel={\textsf{\pgfmathprintnumber[assume math mode=true,fixed,precision=0,1000 sep={,},min exponent for 1000 sep=3]{\tick}}},
]
  \draw[black!55,line width=0.6pt]
    (axis cs:0,\LorrRFiveNMS) -- (axis cs:\LorrRFiveMilestoneAxisMax,\LorrRFiveNMS);
  \draw[black!55,line width=0.6pt]
    (axis cs:0,\LorrRFiveKK) -- (axis cs:\LorrRFiveMilestoneAxisMax,\LorrRFiveKK);
  \node[ga baseline=0.9pt,xshift=2pt] at (axis cs:\LorrRFiveMilestoneAxisMax,\LorrRFiveNMS)
    {Best Entry};
  \node[ga baseline=0.9pt,xshift=2pt] at (axis cs:\LorrRFiveMilestoneAxisMax,\LorrRFiveKK)
    {Second-Best Entry};
  \addplot[color=lorrBlue,const plot,solid,line width=1.1pt,no marks]
    table[x=milestone,y=tasks,col sep=comma]{assets/lorr/random-05-milestones-reported.csv};
  \addplot[only marks,color=lorrBlue,mark=*,mark size=1.2pt,
    mark options={solid,fill=lorrBlue}]
    table[x=milestone,y=tasks,col sep=comma]{assets/lorr/random-05-milestones-reported.csv};
  \node[ga value,anchor=south west,xshift=1pt] at (axis cs:0,54) {54};
  \node[ga value,anchor=west,xshift=2pt]
    at (axis cs:14,\LorrRFiveMean) {\LorrRFiveMeanLabel};
\end{axis}

\node[ga subcaption] at (10.53,0.05) {\figurepanelcaptionbox[6.86cm]{%
\figurepanellabel{fig:graphical-abstract:warehouse} Industrial warehouse. We complete 58.9\%\ifpapertmlr\space\else\\\fi
more orders than the proprietary solution.}};
\node at (8.035,1.3) {\usebox{\GaWarehouseMap}};
\begin{axis}[ga axis,
  at={(9.935cm,0.85cm)},
  xlabel={Milestone Commit},ylabel={Orders/Hour},
  xmin=0,xmax=\CgarMilestoneAxisMax,ymin=0,ymax=850,
  xtick={0,5,10,15},ytick={0,\CgarScalingKnappMean,800},
]
  \draw[black!55,line width=0.6pt]
    (axis cs:0,\CgarScalingKnappMean) -- (axis cs:\CgarMilestoneAxisMax,\CgarScalingKnappMean);
  \node[font=\scriptsize\sffamily,text=black!80,fill=white,inner xsep=1pt,
    inner ysep=0pt,anchor=north east,yshift=-2pt,align=right]
    at (axis cs:\CgarMilestoneAxisMax,\CgarScalingKnappMean) {Proprietary\\Solution};
  \addplot[color=lorrBlue,const plot,solid,line width=1.1pt,no marks]
    table[col sep=comma,x=milestone,y=throughput]
      {assets/cgar/cgar-checkpoint-commits.csv};
  \addplot[only marks,color=lorrBlue,mark=*,mark size=1.2pt,
    mark options={solid,fill=lorrBlue}]
    table[col sep=comma,x=milestone,y=throughput]
      {assets/cgar/cgar-checkpoint-commits.csv};
  \node[ga value,anchor=south west,xshift=1pt] at (axis cs:0,0) {0};
  \node[ga value,anchor=west,xshift=2pt] at (axis cs:15,775.3125) {775.3};
\end{axis}
\end{tikzpicture}
\captionof{figure}{Throughput of the discovered policies across milestone commits.
Human-directed coding agents iteratively refine coordination algorithms, producing policies that outperform strong manually engineered baselines in competitive multi-agent pathfinding problems.
Maps above are illustrative.}
\label{fig:graphical-abstract}
\end{center}

\section{Introduction}
\label{sec:intro}

Multi-agent coordination underlies warehouse fulfillment, factory automation, and parcel sorting.
In lifelong multi-agent pathfinding (LMAPF), tasks arrive continuously, requiring a fleet of robots to coordinate both task allocation and collision-free navigation through a shared spatial network~\citep{wurman2008coordinating,ma2017lifelong,li2021rhcr}.
The operational objective is to maximize system throughput subject to collision avoidance and strict real-time compute budgets.
Because uncoordinated routing decisions cause severe congestion and deadlocks, effective warehouse operation typically requires sophisticated joint optimization of task assignment and trajectory coordination across the fleet.

Existing methods generally fall into two paradigms.
First, the standard approach relies on human experts to engineer sophisticated solvers, dispatch rules, and traffic guidance heuristics~\citep{okumura2022priority,jiang2024scaling,zhang2024guidance}.
Although effective, manual engineering is bottlenecked by human engineering throughput and cannot scale algorithm discovery with available compute.
Second, learning-based methods attempt to bypass this manual effort by training neural policies directly through simulated interaction~\citep{damani2021primal2,skrynnik2024follower,andreychuk2025mapfgpt}.
However, neural policies often lack formal safety guarantees, remain difficult to inspect and verify, and resist modular modification when operational constraints shift.

We study a third approach: \emph{policy search in code space} (PSCS), driven by coding agents.
In this paradigm, human designers specify the system objective, physical constraints, and scaffolding interfaces, while coding agents iteratively refine the implementation of the planner and scheduler directly in source code, which we refer to as \emph{code space}.
Guided by simulation benchmarks, unit tests, and execution traces, the agent discovers procedural algorithms that are computationally efficient, interpretable, and maintainable.
Unlike manual design, the search scales directly with compute; unlike black-box learning, the resulting policy remains fully transparent and verifiable.

Automated code generation has demonstrated promise in discovering novel mathematical algorithms and discrete heuristics~\citep{romeraparedes2024funsearch,novikov2025alphaevolve,liu2024eoh}.
However, existing studies focus primarily on idealized theoretical benchmarks or stylized combinatorial problems.
It remains an open question whether PSCS can scale to demanding cyber-physical systems such as LMAPF, where task allocation, spatial planning, and low-latency runtime engineering must interact robustly under realistic constraints.
Specifically, we investigate whether coding agents can navigate these coupled requirements across both standardized benchmarks and complex industrial environments.

We evaluate PSCS across two distinct domains.
The first is the 2024 League of Robot Runners (LoRR) benchmark~\citep{chan2024lorr}, a synchronous grid competition environment operating with $100$ to $800$ robots.
In this domain, PSCS systematically refines existing baselines to discover layout-specific routing heuristics, outperforming the stronger locally reproduced competition entry by up to 35.0\% in mean throughput (Figure~\ref{fig:graphical-abstract:lorr}).
The second is a high-fidelity simulation of an operational warehouse from KNAPP characterized by irregular lanes, continuous vehicle geometry, and asynchronous execution.
In a challenging production map, our synthesized algorithm outperforms an optimized proprietary baseline by 58.9\% (Figure~\ref{fig:graphical-abstract:warehouse}) while preserving provable progress certificates against deadlock under stated assumptions.

The contributions of this work are as follows:
\textsl{1}) We demonstrate the feasibility and effectiveness of policy search in code space (PSCS) for automated, nontrivial combinatorial algorithm design in LMAPF.
\textsl{2}) On the LoRR 2024 benchmark, we show that PSCS effectively refines established planners to discover map-specific heuristics, outperforming the stronger locally reproduced competition entry by up to 35.0\% in mean throughput.
\textsl{3}) In high-fidelity industrial warehouse environments, we show that PSCS discovers an entirely new class of algorithms that outperforms an optimized proprietary baseline by 58.9\% while maintaining provable progress certification under stated assumptions.

\section{Related Work}
\label{sec:related_work}

This work sits at the intersection of multi-agent pathfinding (MAPF) and automated program synthesis.
Rather than hand-crafting domain heuristics or optimizing parametric weights, we search for coordination policies directly in code space.

\paragraph{Multi-Agent Pathfinding.}
MAPF seeks collision-free trajectories for a team of agents~\citep{stern2019mapf}.
In lifelong MAPF (LMAPF), agents continuously receive new tasks upon completing previous assignments~\citep{ma2017lifelong,li2021rhcr}.
Because optimal solvers such as conflict-based search~\citep{sharon2015cbs} do not scale to hundreds of agents, large-scale deployments rely on fast suboptimal planners that human experts design and tune for specific topologies.
Representative rule-based methods include PIBT~\citep{okumura2022priority} for one-step priority inheritance, Causal-PIBT~\citep{okumura2021causalpibt} for asynchronous execution, and WPPL~\citep{jiang2024scaling}, which combines windowed PIBT with large neighborhood search~\citep{li2021lns,li2022lns2} and traffic guidance graphs.
Although guidance graph optimization~\citep{zhang2024guidance} automates edge-cost tuning over these graphs, the underlying planning algorithms remain fixed.
Alternatively, learning-based formulations train neural policies via reinforcement or imitation learning~\citep{sartoretti2019primal,damani2021primal2,skrynnik2024follower,andreychuk2025mapfgpt}, with recent neural planners and schedulers outperforming earlier benchmark baselines~\citep{jiang2025sillm,gaber2026grand}.
Large language models (LLMs) have also been explored as direct online planners; however, prompting LLMs to emit next-step moves sequentially struggles to scale beyond small stylized instances~\citep{chen2025llmmapf}.

\paragraph{Automated Program Synthesis.}
Policy search in code space builds upon a broader lineage of program synthesis and algorithmic discovery.
Early programmatic reinforcement learning formulated policies as interpretable programs, though restricted to small, domain-specific languages~\citep{verma2018pirl,trivedi2021leaps}.
Modern LLMs lift these grammar constraints, enabling coding agents to manipulate general-purpose programming languages such as C++ and Python directly.
Recent frameworks leverage this capability to evolve individual functions or isolated files for discrete algorithms and combinatorial optimization heuristics~\citep{romeraparedes2024funsearch,novikov2025alphaevolve,liu2024eoh,ye2024reevo}.
Complementary lines of work optimize prompt graphs and textual variables through verbal feedback and execution traces~\citep{yuksekgonul2025textgrad,agrawal2026gepa}.
More recently, automated loops have advanced to whole programs and repositories, in SAT solving, model training, and optimization contests~\citep{yu2025satlution,karpathy2026autoresearch,imajuku2025alebench}, as well as dispatch and charging policies for automated guided vehicles without spatial path planning~\citep{li2026traces}.

Our study differs from existing literature in the following aspects.
\textsl{1}) Whereas existing automated MAPF methods predominantly tune heuristic parameters or train parametric neural weights, our work searches over planner and scheduler \emph{implementations} directly in code space.
\textsl{2}) Furthermore, while prior program synthesis frameworks rely on sophisticated evolutionary harnesses for stylized benchmarks, we demonstrate that state-of-the-art coding agents discover effective algorithms for demanding LMAPF problems without rigid search harnesses.
\section{Problem Setup}
\label{sec:problem_setup}

We consider an environment consisting of $n$ agents, a finite directed configuration graph $G=(V,E)$, and a discrete planning horizon of $T$ clock ticks.
Each vertex $u \in V$ represents an agent configuration, such as a discrete position and orientation.
Each directed edge $e=(u,v) \in E$ denotes a feasible motion with strictly positive traversal duration $\delta(e)\in\mathbb{Z}_{>0}$ ticks.
At any tick $t \in \{0, \dots, T\}$, an agent is either stationary at a vertex $u \in V$ or traversing an edge $e \in E$ with $\tau \in \{1, \dots, \delta(e)\}$ ticks remaining before arrival.
Waiting is permitted only at vertices; edge traversals are non-preemptive and must proceed until completion.

Collision avoidance is formalized through a spatial conflict map.
Let $\Gamma: V\cup E \to 2^{V\cup E}$ denote a conflict function mapping any occupied vertex or edge to the set of configurations and edges that no other agent may occupy simultaneously.
A valid multi-agent configuration requires that for any pair of agents occupying elements $x, y \in V \cup E$, $y \notin \Gamma(x)$.
A stationary agent at vertex $u$ occupies only $u$.
A moving agent traversing $e=(u,v)$ occupies edge $e$ throughout traversal, its departure vertex $u$ at the initiation tick, and its arrival vertex $v$ at the completion tick.
This general mapping captures classical grid-based vertex and swap conflicts, as well as complex footprint conflicts in continuous warehouse layouts.

Tasks enter an available-task pool $Q$ over time according to an exogenous stochastic arrival process.
Each task consists of an ordered sequence of \emph{errands}.
An errand is a tuple $(W, \sigma)$, where $W \subseteq V$ denotes the set of admissible goal configurations and $\sigma \in \mathbb{Z}_{\ge 0}$ is the required service duration.
An agent completes an errand by dwelling at a chosen vertex $w \in W$ for $\sigma$ contiguous ticks before proceeding to the subsequent errand.
Task assignment is constrained such that each agent serves at most one active task at any time, and reassignments are prohibited once execution of the initial errand begins.

A coordination policy $\pi$ assigns available tasks to idle agents and selects, for each agent stationed at a vertex, either a wait action or an outgoing edge transition.
Let $J(\pi)$ denote the cumulative number of tasks completed by horizon $T$.
The primary objective is to synthesize a policy $\pi$ that maximizes expected throughput $\mathbb{E}[J(\pi)]$, subject to collision avoidance, valid task assignments, kinematic motion constraints, and real-time execution budgets.
For domain-specific instantiations and operational constraints, see Sections~\ref{sec:lorr:problem_setup} and~\ref{sec:warehouse:problem_setup}.

\section{Policy Search in Code Space}
\label{sec:method}

In policy search in code space (PSCS), we search for the source code of the coordination policy itself (Figure~\ref{fig:method}).

We initiate the search by specifying the environment, operational constraints, performance objectives, and an initial algorithm or reference implementation.
Rather than using a rigid, specialized evolutionary harness, we prompt the coding agent directly in natural language, with minimal steering or redirecting as the search progresses.
Remarkably, we find state-of-the-art coding agents already effective without additional augmentation from an external search harness.

Once the search is initiated, the coding agent iteratively refines the codebase through a four-stage loop:
\textsl{1}) proposing algorithmic edits,
\textsl{2}) implementing source-level modifications,
\textsl{3}) evaluating candidates in closed-loop simulation, and
\textsl{4}) diagnosing bottlenecks from execution traces and profiler logs.
Importantly, language models are queried strictly offline during development; policy execution at runtime is handled entirely by compiled binaries with zero model inference latency.

\begin{figure}[t]
\centering
\begin{tikzpicture}
\node[ga number] at (0.3,4.98) {1};
\node[inner sep=0pt] at (0.85,4.98) {\usebox{\GaUserIcon}};
\node[ga block,anchor=west,font=\scriptsize\sffamily\bfseries] at (1.3,5.5)
  {User};
\node[ga block,anchor=north west,align=left] at (1.3,5.3)
  {1. Env\\2. Constraint\\3. Objective\\4. Seed Algorithm};
\draw[ga arrow] (3.1,4.98) -- (4.03,4.98);

\node[ga number] at (4.46,4.98) {2};
\fill[black!80,rounded corners=1.5pt] (4.71,4.77) rectangle (5.31,5.19);
\node[font=\scriptsize\ttfamily\bfseries,text=white,anchor=west,
  inner sep=0pt] at (4.8,4.98) {>\_};
\node[ga block,anchor=west,font=\scriptsize\sffamily\bfseries] at (5.51,5.5)
  {Coding Agent};
\node[ga step,anchor=west] at (5.51,5.07) {Propose};
\node[ga step,anchor=west] at (7.29,5.07) {Implement};
\node[ga step,anchor=west] at (7.29,4.45) {Evaluate};
\node[ga step,anchor=west] at (5.51,4.45) {Diagnose};
\coordinate (ring) at (6.87,4.76);
\foreach \from in {125,35,-55,-145}
  \draw[ga ring] (ring) ++(\from:0.32)
    arc[start angle=\from,end angle=\from-70,radius=0.32];
\draw[ga arrow] (8.62,4.98) -- (9.55,4.98);

\node[ga number] at (10,4.98) {3};
\node[inner sep=0pt] at (10.5,4.98) {\usebox{\GaCodeIcon}};
\node[ga block,anchor=west,font=\scriptsize\sffamily\bfseries] at (10.95,5.5)
  {Policy as Code};
\begin{scope}
  \clip[rounded corners=3pt] (10.95,4.35) rectangle (13.8,5.29);
  \fill[black!82] (10.95,4.35) rectangle (13.8,5.29);
  \fill[black!66] (10.95,5.09) rectangle (13.8,5.29);
\end{scope}
\fill[macRed] (11.08,5.19) circle (0.042);
\fill[macYellow] (11.22,5.19) circle (0.042);
\fill[macGreen] (11.36,5.19) circle (0.042);
\node[ga codeline] at (11.03,4.91) {const size\_t attempts =};
\node[ga codeline] at (11.03,4.72) {\phantom{xx}allow\_detour ? kMax...};
\node[ga codeline] at (11.03,4.53)
  {\phantom{xxxxxxxxxxxxxxx}: size\_t\{1\};};
\end{tikzpicture}
\caption{Policy search in code space.
The user provides constraints, performance objectives, and an initial codebase.
The coding agent iteratively modifies source code, benchmarks candidates in simulation, and inspects traces to guide subsequent revisions.}
\label{fig:method}
\end{figure}

During the search, all candidate policies must pass the following hierarchical verification before acceptance.
First, a candidate must compile cleanly and pass unit tests enforcing collision avoidance, deadlock prevention, and compute budgets.
Next, the coding agent uses fast short-horizon simulations to filter unpromising candidates.
Lastly, surviving candidates undergo longer, multi-seed runs to confirm throughput improvements across stochastic task arrivals.
Any candidate that fails the verification is rejected.

We apply this workflow to two distinct case studies: refining competitive heuristics on an LMAPF competition benchmark (Section~\ref{sec:lorr}), and discovering a routing policy for industrial warehouses with a provable progress guarantee under stated assumptions (Section~\ref{sec:warehouse}).

\newcommand{\LorrFinalRows}{%
RANDOM-01 & 644.8 & 8.1 & 687.1 & 6.8 & 736.8 & 10.4 & 724 & 754 & $+7.22\%$ \\
RANDOM-02 & 1,207.6 & 16.1 & 1,219.1 & 18.0 & 1,356.9 & 14.6 & 1,332 & 1,380 & $+11.30\%$ \\
RANDOM-03 & 2,384.1 & 42.3 & 1,956.6 & 20.1 & 2,647.5 & 31.4 & 2,606 & 2,703 & $+11.05\%$ \\
RANDOM-04 & 2,610.8 & 33.6 & 1,279.9 & 46.4 & 2,790.1 & 54.9 & 2,716 & 2,886 & $+6.87\%$ \\
RANDOM-05 & 3,150.5 & 69.7 & 2,065.3 & 25.5 & 4,253.3 & 59.8 & 4,170 & 4,331 & $+35.00\%$ \\%
}

\section{Case Study 1: League of Robot Runners}
\label{sec:lorr}

Our first case study investigates whether PSCS can systematically discover and refine high-performance heuristics for established LMAPF benchmarks.
We evaluate the framework on five challenging instances from the 2024 League of Robot Runners (LoRR) competition, wherein a single policy must jointly perform task allocation and path planning~\citep{lorr2024archive}. At the time of writing this paper, the post-2024 LoRR competition archives have not yet been made public.

\subsection{Experiment Setup}
\label{sec:lorr:problem_setup}

The LMAPF setting in LoRR follows the classical instantiation of the formulation in Section~\ref{sec:problem_setup}.
A vertex represents a free cell with one of four cardinal orientations, yielding $|V|=3{,}276$.
Each directed edge represents either a forward translation to an adjacent open cell or a $90^\circ$ in-place rotation, each with duration $\delta(e)=1$ tick.
Two agents conflict when they occupy the same spatial cell or traverse an edge in opposing directions simultaneously; thus, $\Gamma(u)$ contains all four directional vertices at the corresponding grid coordinate.
An errand target $W$ comprises the four directional vertices corresponding to a designated service cell, with service time $\sigma=0$.
At step $t=0$, $1.5n$ tasks are visible, and each fulfilled task immediately reveals a replacement task.
The dispatcher selects assignments from the available-task pool $Q$, without a FIFO or task-level fairness requirement.
The instances span $n \in \{100, 200, 400, 700, 800\}$ robots, evaluated over planning horizons $T \in \{600, 600, 800, 1{,}000, 2{,}000\}$ ticks, respectively.

The policy search was initiated by supplying the coding agent, Codex Astra, with standard planning baselines: an implementation of Priority Inheritance with Backtracking (PIBT)~\citep{okumura2022priority}, the windowed large neighborhood search framework with guidance graphs from the 2023 winning entry WPPL~\citep{jiang2024scaling}, and task-dispatching routines from 2024 participants.
The environments consist of the five instances RANDOM-01--05, which share a common $32\times32$ 2D grid containing 819 traversable cells (Figure~\ref{fig:lorr:maps}).
The agent operated in a command-line environment equipped with compilation toolchains, local simulation scripts, and trajectory verifiers.
The search objective was to maximize cumulative throughput while strictly respecting collision avoidance, valid assignment constraints, a 1.0\,s per-step execution budget, and a 32\,GB memory ceiling.

\begin{figure}[t]
\centering
\begin{minipage}{0.195\linewidth}\centering
\resizebox{\linewidth}{!}{\input{assets/lorr/random-01-map.tex}}\par\vspace{1pt}
\maplabel{\textbf{RANDOM-01}\\100 robots}
\end{minipage}\hfill
\begin{minipage}{0.195\linewidth}\centering
\resizebox{\linewidth}{!}{\input{assets/lorr/random-02-map.tex}}\par\vspace{1pt}
\maplabel{\textbf{RANDOM-02}\\200 robots}
\end{minipage}\hfill
\begin{minipage}{0.195\linewidth}\centering
\resizebox{\linewidth}{!}{\input{assets/lorr/random-03-map.tex}}\par\vspace{1pt}
\maplabel{\textbf{RANDOM-03}\\400 robots}
\end{minipage}\hfill
\begin{minipage}{0.195\linewidth}\centering
\resizebox{\linewidth}{!}{\input{assets/lorr/random-04-map.tex}}\par\vspace{1pt}
\maplabel{\textbf{RANDOM-04}\\700 robots}
\end{minipage}\hfill
\begin{minipage}{0.195\linewidth}\centering
\resizebox{\linewidth}{!}{\input{assets/lorr/random-05-map.tex}}\par\vspace{1pt}
\maplabel{\textbf{RANDOM-05}\\800 robots}
\end{minipage}
\caption{The five RANDOM instances at $t=0$.
Gray cells are obstacles, blue cells hold robots.}
\label{fig:lorr:maps}
\end{figure}

\subsection{Results and Analysis}
\label{sec:lorr:results}

We benchmark our synthesized policy, termed Pipelined Look-Ahead with Task Matching (PLATM), against No Man's Sky (NMS) and Kitty Knight (KK) in the combined track, the top two winning entries of LoRR 2024.
Both baseline entries were re-evaluated on identical maps, initial poses, task sequences, and horizons using identical hardware: 16 physical CPU cores, a 32\,GB memory ceiling, and a strict 1.0\,s execution budget per step.
Evaluations comprise eight paired trials per instance using fixed initial poses alongside identical random seeds for both the planner and the task arrival process.

Table~\ref{tab:lorr:final} summarizes the final throughput comparison across all five competition instances.
PLATM consistently outperforms both expert-engineered baselines across all fleet densities, achieving performance gains of up to 35.0\% over the stronger baseline.
The performance advantage is particularly pronounced on the densest instance ($n = 800$), where mutual blocking and congestion collapse severely degrade standard heuristic solvers.

Figure~\ref{fig:lorr:dev} illustrates the development trajectory of PLATM on RANDOM-05.
Initial candidates failed due to exponential backtracking timeouts and graph deadlocks under standard PIBT at only 54 tasks.
The agent systematically resolved these bottlenecks in stages: having waiting robots turn toward their next move in advance ($\sim$2{,}400 tasks), evaluating more candidate plans per step ($\sim$3{,}000 tasks), and scoring candidate moves across short simulated rollouts ($>$4{,}000 tasks).
To adapt to less crowded maps (RANDOM-01--04), the agent enabled windowed group replanning and eliminated thread-creation overhead to stay strictly within the one-second time limit.
The entire search converged in 42.3 wall-clock hours, consuming 3.5M output tokens ($\approx$\$700).

The synthesized policy couples task allocation and spatial planning in a unified two-level architecture:
At the upper level, the task dispatcher balances assignments using a rotation-aware cost metric rather than simple grid distance, while allowing robots to trade unstarted goals dynamically as spatial conditions change.
At the lower level, the motion planner is matched to traffic density: it coordinates the fleet one step at a time on the crowded instances, but uses windowed group replanning on the sparser ones.
Complementing this adaptive planning, waiting robots use idle ticks to rotate toward their intended departure headings in advance, eliminating rotational delay the moment a path opens.

\begin{table}[t]
\centering
\caption{Tasks completed by PLATM and the top LoRR entries, as mean $\pm$ s.d.\ over eight paired trials.}
\label{tab:lorr:final}
\small
\setlength{\tabcolsep}{4pt}
\begin{tabular}{lr@{\,\(\pm\)\,}lr@{\,\(\pm\)\,}l>{\bfseries}r@{\,\(\pm\)\,}>{\bfseries}lrrr}
\toprule
 & \multicolumn{4}{c}{Top LoRR Entries} & \multicolumn{4}{c}{PLATM} & \\
\cmidrule(lr){2-5}\cmidrule(lr){6-9}
 & \multicolumn{2}{c}{NMS} & \multicolumn{2}{c}{KK} & \multicolumn{1}{r@{\,\(\pm\)\,}}{Mean} & \multicolumn{1}{@{}l}{S.D.} & Min & Max & Gain \\
\midrule
\LorrFinalRows
\bottomrule
\end{tabular}
\end{table}

\begin{figure}[t]
\centering
\begin{minipage}[b]{0.49\linewidth}\centering
\begin{tikzpicture}
\begin{axis}[
  lorr development axis,
  scale only axis,width=0.78\linewidth,height=1.75cm,
  xlabel={Milestone Commit},ylabel={Tasks Completed},
  xmin=0,xmax=\LorrRFiveMilestoneAxisMax,ymin=0,ymax=4800,
  xtick={0,5,10,15},ytick={0,1000,2000,3000,4000},clip=false,
]
\draw[black!55,solid,line width=0.6pt]
  (axis cs:0,\LorrRFiveNMS) -- (axis cs:\LorrRFiveMilestoneAxisMax,\LorrRFiveNMS);
\draw[black!55,solid,line width=0.6pt]
  (axis cs:0,\LorrRFiveKK) -- (axis cs:\LorrRFiveMilestoneAxisMax,\LorrRFiveKK);
\node[font=\scriptsize\sffamily,text=black!80,fill=white,inner xsep=2pt,
  inner ysep=0pt,anchor=north east,yshift=-2pt]
  at (axis cs:\LorrRFiveMilestoneAxisMax,\LorrRFiveNMS) {NMS};
\node[font=\scriptsize\sffamily,text=black!80,fill=white,inner xsep=2pt,
  inner ysep=0pt,anchor=north east,yshift=-2pt]
  at (axis cs:\LorrRFiveMilestoneAxisMax,\LorrRFiveKK) {KK};
\addplot[color=lorrBlue,const plot,solid,line width=1.25pt,no marks]
 table[x=milestone,y=tasks,col sep=comma]{assets/lorr/random-05-milestones-reported.csv};
\addplot[only marks,color=lorrBlue,mark=*,mark size=1.5pt,
 mark options={solid,fill=lorrBlue}]
 table[x=milestone,y=tasks,col sep=comma]{assets/lorr/random-05-milestones-reported.csv};
\input{assets/lorr/random-05-summary-key-points.tex}
\end{axis}
\end{tikzpicture}\par\vspace{2pt}
\figurepanelcaption{\figurepanellabel{fig:lorr:dev:throughput} Throughput progression across code milestone commits.}
\end{minipage}\hfill
\begin{minipage}[b]{0.49\linewidth}\centering
\begin{tikzpicture}
\begin{axis}[
  lorr development axis,
  scale only axis,width=0.78\linewidth,height=1.75cm,
  xlabel={Elapsed Hours},ylabel={Milestone Commit},
  xmin=0,xmax=\LorrRFiveXAxisMax,ymin=0,ymax=\LorrRFiveMilestoneAxisMax,
  xtick={0,10,20,30,40},ytick={0,5,10,15},clip=false,
]
\draw[black!55,solid,line width=0.6pt] (axis cs:5.56,0) -- (axis cs:5.56,\LorrRFiveMilestoneAxisMax);
\node[font=\scriptsize\sffamily,text=black!80,inner sep=0pt,align=center,anchor=south,yshift=1pt] at (axis cs:5.56,\LorrRFiveMilestoneAxisMax) {0.6M\\\$120};
\draw[black!55,solid,line width=0.6pt] (axis cs:16.16,0) -- (axis cs:16.16,\LorrRFiveMilestoneAxisMax);
\node[font=\scriptsize\sffamily,text=black!80,inner sep=0pt,align=center,anchor=south,yshift=1pt] at (axis cs:16.16,\LorrRFiveMilestoneAxisMax) {1.5M\\\$280};
\draw[black!55,solid,line width=0.6pt] (axis cs:31.45,0) -- (axis cs:31.45,\LorrRFiveMilestoneAxisMax);
\node[font=\scriptsize\sffamily,text=black!80,inner sep=0pt,align=center,anchor=south,yshift=1pt] at (axis cs:31.45,\LorrRFiveMilestoneAxisMax) {2.7M\\\$530};
\draw[black!55,solid,line width=0.6pt] (axis cs:42.34,0) -- (axis cs:42.34,\LorrRFiveMilestoneAxisMax);
\node[font=\scriptsize\sffamily,text=black!80,inner sep=0pt,align=center,anchor=south,yshift=1pt] at (axis cs:42.34,\LorrRFiveMilestoneAxisMax) {3.5M\\\$700};
\addplot[color=lorrBlue,const plot,solid,line width=1.25pt,no marks]
 table[x=hours,y=milestone,col sep=comma]{assets/lorr/random-05-milestones.csv};
\addplot[only marks,color=lorrBlue,mark=*,mark size=1.5pt,
 mark options={solid,fill=lorrBlue}]
 table[x=hours,y=milestone,col sep=comma]{assets/lorr/random-05-milestones.csv};
\end{axis}
\end{tikzpicture}\par\vspace{2pt}
\figurepanelcaption{\figurepanellabel{fig:lorr:dev:time} Elapsed search time, token consumption, and API expenditure across milestones.}
\end{minipage}
\caption{Policy search trajectory and computational cost of PLATM on RANDOM-05 (800 robots).}
\label{fig:lorr:dev}
\end{figure}

\newcommand{\CgarSetupRows}{%
WAREHOUSE-01 & 65 & 1,266 & 3,884 & 294 & 6 \\
WAREHOUSE-02 & 17 & 799 & 2,433 & 226 & 2 \\
WAREHOUSE-03 & 8 & 292 & 964 & 88 & 1 \\
WAREHOUSE-04 & 19 & 894 & 2,926 & 241 & 4 \\
WAREHOUSE-05 & 13 & 1,116 & 3,917 & 312 & 1 \\
WAREHOUSE-06 & 32 & 2,538 & 7,800 & 725 & 4 \\
}

\newcommand{\CgarRows}{%
WAREHOUSE-01 & 65 & 51.8 & 487.9 & 51.9 & \textbf{775.3} & \textbf{25.4} & 733.0 & 808.5 & 11.93 & $+58.9\%$\phantom{)\tnote{2}} \\
WAREHOUSE-02 & 17 & 43.5 & (127.0 & 49.6)\rlap{\smash{\tnote{2}}} & \textbf{278.6} & \textbf{2.7} & 275.0 & 282.0 & 16.39 & ($+119.4\%$)\smash{\tnote{2}} \\
WAREHOUSE-03 & 8 & 21.4 & (17.7 & 14.8)\rlap{\smash{\tnote{2}}} & \textbf{135.1} & \textbf{5.8} & 128.5 & 144.0 & 16.89 & ($+664.0\%$)\smash{\tnote{2}} \\
WAREHOUSE-04 & 19 & 49.3 & (79.8 & 24.6)\rlap{\smash{\tnote{2}}} & \textbf{269.4} & \textbf{8.7} & 254.5 & 285.0 & 14.18 & ($+237.6\%$)\smash{\tnote{2}} \\
WAREHOUSE-05 & 13 & 38.3 & (28.4 & 7.3)\rlap{\smash{\tnote{2}}} & \textbf{191.4} & \textbf{7.5} & 177.5 & 201.5 & 14.73 & ($+574.7\%$)\smash{\tnote{2}} \\
WAREHOUSE-06 & 32 & 74.2 & (257.2 & 22.5)\rlap{\smash{\tnote{2}}} & \textbf{304.8} & \textbf{12.6} & 290.0 & 320.5 & 9.52 & ($+18.5\%$)\smash{\tnote{2}} \\
}

\newcommand{\CgarSummaryRows}{%
WAREHOUSE-01 & 65 & 487.9 & 51.9\tnote{a} & 775.3 & 25.4 & $+58.9\%$ \\
WAREHOUSE-02 & 17 & 127.0 & 49.6\tnote{b} & 278.6 & 2.7 & $+119.4\%$ \\
WAREHOUSE-03 & 8 & 17.7 & 14.8\tnote{b} & 135.1 & 5.8 & $+664.0\%$ \\
WAREHOUSE-04 & 19 & 79.8 & 24.6\tnote{b} & 269.4 & 8.7 & $+237.6\%$ \\
WAREHOUSE-05 & 13 & 28.4 & 7.3\tnote{b} & 191.4 & 7.5 & $+574.7\%$ \\
WAREHOUSE-06 & 32 & 257.2 & 22.5\tnote{b} & 304.8 & 12.6 & $+18.5\%$ \\
}

\section{Case Study 2: Industrial Warehouse}
\label{sec:warehouse}

Our second case study investigates whether PSCS can synthesize an entirely new algorithm for physically constrained, real-world LMAPF environments.
We evaluate our approach on high-fidelity simulations of operational KNAPP industrial warehouses, wherein an autonomous mobile fleet operates under continuous 3D vehicle geometries, kinematic lane constraints, and asynchronous execution.

\subsection{Experiment Setup}
\label{sec:warehouse:problem_setup}

The warehouse LMAPF setting generalizes the formulation in Section~\ref{sec:problem_setup} to 3D continuous space.
Figure~\ref{fig:test2} illustrates elements of this continuous warehouse layout.
A \emph{storage node} is the ground-level interface to a vertical column of storage locations.
A configuration vertex $u \in V$ represents a robot's 3D pose (position and orientation) and \emph{service status}, with the robot located either on the warehouse floor or elevated at a storage node.
The service status records the stage of a docking or material-handling operation, such as undocked, docked, retrieved, stored, loaded, or unloaded.
Directed edges $e = (u,v) \in E$ encompass forward and backward lane translations, in-place rotations, lateral docking maneuvers at storage nodes, and vertical lift transitions.
Edge durations $\delta(e)$ are governed by physical kinematics, with bounded floor velocities and accelerations, rotational speed limits, and vertical movement speed limits.
An errand target $W$ comprises the vertices at a storage node or picking station with a few seconds of service time $\sigma > 0$.

Spatial conflicts are evaluated via continuous volumetric occupancy.
Let $S(z) \subset \mathbb{R}^3$ denote the swept 3D bounding volume occupied by an agent at vertex pose or along edge transition $z \in V \cup E$.
The geometric conflict mapping is defined as $\Gamma(z) = \{z' \in V \cup E \mid S(z) \cap S(z') \neq \varnothing\}$.
This relation captures spatial interference that grid abstractions cannot model, such as wide vehicle sweeps into adjacent corridors during turning maneuvers.

Orders arrive via an exogenous stochastic stream at approximately 1,800 orders per hour.
Each order consists of an alternating sequence of pickup and drop-off errands between storage nodes and picking stations.
Simulations run for a horizon of $T = 7{,}200$\,s, with throughput measured in completed orders per hour.
Algorithms must make decisions faster than real time.

The search was initiated using Codex 5.6 Sol and subsequently Codex Astra (upon its release) within a command-line environment equipped with the warehouse simulator.
Auxiliary models Fable 5.1, Fable 5, and Opus 5 were occasionally consulted for code review and diagnostic feedback at the discretion of the primary coding agents.
The search was seeded solely with a design specification: the agent was required to build an algorithm that, under stated assumptions, provably completes, in finite time, every accepted goal that is not withdrawn or replaced, while leaving architectural freedom for throughput heuristics to develop.
The user directed the coding agent to prototype on WAREHOUSE-01 first and later generalize across the remaining five warehouse environments (Figure~\ref{fig:cgar:maps}).
The objective was to maximize order throughput, subject to continuous geometric collision avoidance, legal order assignments, and real-time execution, with no explicit memory constraints.

\begin{figure}[t]
\centering
\includegraphics[width=0.55\linewidth]{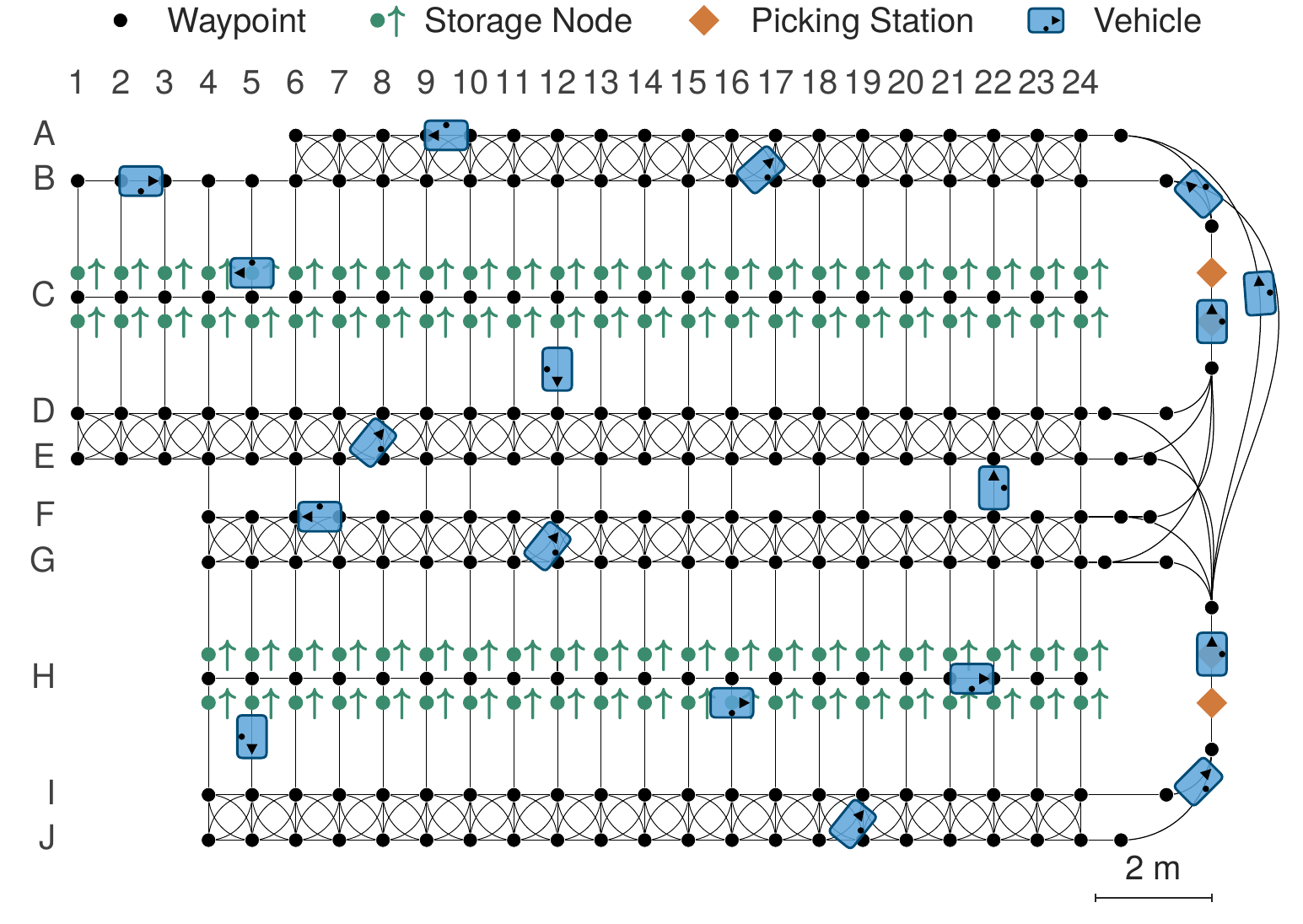}
\caption{An illustrative warehouse layout.
Black lanes connect waypoints, green markers denote storage nodes, and orange diamonds denote picking stations.}
\label{fig:test2}
\end{figure}

\begin{figure}[t]
\centering
\renewcommand{\maplabelfont}{\normalfont\sffamily\fontsize{8}{9.5}\selectfont}
\input{assets/cgar/warehouse-maps-panels.tex}
\WarehouseMapPanels{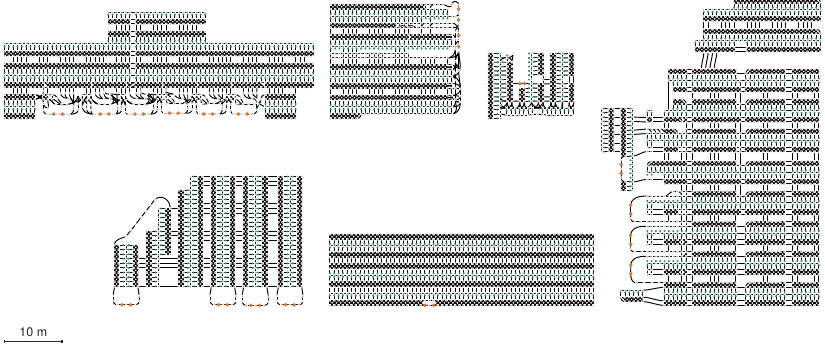}
\caption{The six warehouse layouts.
Maps are mirrored, and some rotated, for anonymity.}
\label{fig:cgar:maps}
\end{figure}

\subsection{Results and Analysis}
\label{sec:warehouse:results}

\begin{table}[t]
\centering
\caption{Orders completed per hour by CGAR and KNAPP, as mean $\pm$ s.d.\ over eight paired trials.}
\label{tab:cgar:final}
\small
\setlength{\tabcolsep}{3pt}
\begin{threeparttable}
\begin{tabular}{@{}l rr w{r}{2.8em}@{\,\(\pm\)\,}w{l}{2.8em} r@{\,\(\pm\)\,}l rrr r@{}}
\toprule
 & & & \multicolumn{2}{c}{KNAPP} & \multicolumn{5}{c}{CGAR} & \\
\cmidrule(lr){4-5}\cmidrule(lr){6-10}
 & \multicolumn{1}{c}{$n$\rlap{\smash{\tnote{1}}}} & \multicolumn{1}{c}{$d$\rlap{\smash{\tnote{1}}}} & Mean & S.D. & Mean & S.D. & Min & Max & \multicolumn{1}{c}{$\mu/n$\rlap{\smash{\tnote{1}}}} & Gain \\
\midrule
\CgarRows
\bottomrule
\end{tabular}
\begin{tablenotes}[flushleft]
\footnotesize\raggedright
\item[1] $n$ is the number of robots, $d$ the mean shortest path length of an order in meters, and $\mu/n$ the CGAR mean throughput per robot.
\item[2] Values in parentheses are not optimized, as KNAPP has not tuned these maps for their method.
\end{tablenotes}
\end{threeparttable}
\end{table}

\begin{figure}[t]
\centering
\begin{minipage}[b]{0.49\linewidth}\centering
\begin{tikzpicture}
\begin{axis}[
  lorr development axis,
  scale only axis,width=0.78\linewidth,height=1.75cm,
  xlabel={Milestone Commit},ylabel={Orders/Hour},
  xmin=0,xmax=\CgarMilestoneAxisMax,ymin=0,ymax=850,
  xtick={0,5,10,15},ytick={0,200,400,600,800},clip=false,
]
\draw[black!55,solid,line width=0.6pt]
  (axis cs:0,\CgarScalingKnappMean) -- (axis cs:\CgarMilestoneAxisMax,\CgarScalingKnappMean);
\node[font=\scriptsize\sffamily,text=black!80,fill=white,inner xsep=2pt,
  inner ysep=0pt,anchor=north east,yshift=-2pt]
  at (axis cs:\CgarMilestoneAxisMax,\CgarScalingKnappMean) {Proprietary Solution};
\addplot[color=lorrBlue,const plot,solid,line width=1.25pt,no marks]
  table[col sep=comma,x=milestone,y=throughput]{assets/cgar/cgar-checkpoint-commits.csv};
\addplot[only marks,color=lorrBlue,mark=*,mark size=1.5pt,
  mark options={solid,fill=lorrBlue}]
  table[col sep=comma,x=milestone,y=measured]{assets/cgar/cgar-checkpoint-commits.csv};
\addplot[only marks,color=lorrBlue,mark=*,mark size=1.5pt,
  mark options={solid,fill=lorrBlue}]
  table[col sep=comma,x=milestone,y=estimated]{assets/cgar/cgar-checkpoint-commits.csv};
\node[lorr value note,anchor=south west] at (axis cs:0,0) {0};
\node[lorr value note,anchor=south] at (axis cs:9,630.625) {630.6};
\node[lorr value note,anchor=south east] at (axis cs:15,775.3125) {775.3};
\end{axis}
\end{tikzpicture}\par\vspace{2pt}
\figurepanelcaption{\figurepanellabel{fig:cgar:dev:throughput} Throughput progression across code milestone commits.}
\end{minipage}\hfill
\begin{minipage}[b]{0.49\linewidth}\centering
\begin{tikzpicture}
\begin{axis}[
  lorr development axis,
  scale only axis,width=0.78\linewidth,height=1.75cm,
  xlabel={Elapsed Hours},ylabel={Milestone Commit},
  xmin=0,xmax=\CgarActiveAxisMax,ymin=0,ymax=\CgarMilestoneAxisMax,
  xtick={0,50,100,150},ytick={0,5,10,15},clip=false,
]
\draw[black!55,solid,line width=0.6pt] (axis cs:27.23,0) -- (axis cs:27.23,\CgarMilestoneAxisMax);
\node[font=\scriptsize\sffamily,text=black!80,inner sep=0pt,align=center,anchor=south,yshift=1pt] at (axis cs:27.23,\CgarMilestoneAxisMax) {6.3M\\\$1,400};
\draw[black!55,solid,line width=0.6pt] (axis cs:82.62,0) -- (axis cs:82.62,\CgarMilestoneAxisMax);
\node[font=\scriptsize\sffamily,text=black!80,inner sep=0pt,align=center,anchor=south,yshift=1pt] at (axis cs:82.62,\CgarMilestoneAxisMax) {13.9M\\\$3,100};
\draw[black!55,solid,line width=0.6pt] (axis cs:110.08,0) -- (axis cs:110.08,\CgarMilestoneAxisMax);
\node[font=\scriptsize\sffamily,text=black!80,inner sep=0pt,align=center,anchor=south,yshift=1pt] at (axis cs:110.08,\CgarMilestoneAxisMax) {16.8M\\\$3,600};
\draw[black!55,solid,line width=0.6pt] (axis cs:148.48,0) -- (axis cs:148.48,\CgarMilestoneAxisMax);
\node[font=\scriptsize\sffamily,text=black!80,inner sep=0pt,align=center,anchor=south,yshift=1pt] at (axis cs:148.48,\CgarMilestoneAxisMax) {20.6M\\\$4,400};
\draw[black!55,solid,line width=0.6pt] (axis cs:177.80,0) -- (axis cs:177.80,\CgarMilestoneAxisMax);
\node[font=\scriptsize\sffamily,text=black!80,inner sep=0pt,align=center,anchor=south,yshift=1pt] at (axis cs:177.80,\CgarMilestoneAxisMax) {22.0M\\\$4,800};
\addplot[color=lorrBlue,const plot,solid,line width=1.25pt,no marks]
 table[x=active_hours,y=milestone,col sep=comma]{assets/cgar/cgar-checkpoint-commits.csv};
\addplot[only marks,color=lorrBlue,mark=*,mark size=1.5pt,
 mark options={solid,fill=lorrBlue}]
 table[x=active_hours,y=milestone,col sep=comma]{assets/cgar/cgar-checkpoint-commits.csv};
\end{axis}
\end{tikzpicture}\par\vspace{2pt}
\figurepanelcaption{\figurepanellabel{fig:cgar:dev:time} Elapsed search time, token consumption, and API expenditure across milestones.}
\end{minipage}
\caption{Policy search trajectory and computational cost of CGAR on WAREHOUSE-01 (65 robots).}
\label{fig:cgar:dev}
\end{figure}

We benchmark our synthesized policy, named Core-Graph-Anchored Routing (CGAR), against an optimized proprietary production baseline.
As in Section~\ref{sec:lorr}, all evaluations execute on identical hardware, pairing eight trials per map with fixed initial poses and shared random seeds across methods for order arrivals (CGAR's decisions are deterministic).

Table~\ref{tab:cgar:final} reports throughput across eight paired two-hour evaluation runs for all six warehouse facilities.
CGAR outperforms the proprietary solution by 58.9\% on WAREHOUSE-01 and achieves higher throughput across the remaining five layouts where the baseline was uncalibrated.
The proprietary solution relies on extensive manual calibration of lane directions, traversal costs, and vertex-level stopping rules for each specific floor plan, limiting its transferability across facilities.
In contrast, because CGAR operates on topology-agnostic coordination, a single synthesized policy transfers directly across all layouts without any site-specific recalibration.
Across fleet sizes from 8 to 65 robots and mean route lengths from 21 to 74\,m, CGAR consistently sustains 9.5 to 16.9 orders per hour per robot.

Figure~\ref{fig:cgar:dev} illustrates how the coding agent synthesized and refined CGAR on WAREHOUSE-01.
The initial seed implementation enforced the progress guarantee too rigidly, actuating robots in near-sequential fashion, completing almost zero orders.
The agent then unlocked concurrent execution by certifying simultaneous transitions that preserved a return path to a position where progress is guaranteed ($\sim$350 orders/hour).
It next resolved stop-and-go patterns and fleet freezes by stabilizing lane admission decisions ($\sim$440 orders/hour).
Throughput then plateaued near 445 orders/hour because robots took unnecessary detours around temporarily occupied corridors.
The upgraded coding models (Astra and Fable 5.1) resolved this by introducing route commitments and fair queue release (630.6 orders/hour).
Finally, the agent improved long-distance yielding maneuvers and refined order dispatching (775.3 orders/hour).
Across the entire synthesis process, the search utilized 177.8 elapsed development hours and consumed 22.0M output tokens ($\approx$\$4,800).

The resulting CGAR architecture decouples formal progress certification from heuristic throughput optimization:
At its foundation, an invariant ensures that every admitted motion keeps a recovery plan back to the core; under the assumptions of Theorem~\ref{thm:cgar} (Appendix~\ref{app:cgar}), this guarantees that every accepted goal that is not withdrawn or replaced is completed in finite time.
Operating above this safety layer, throughput heuristics optimize traffic flow without the risk of inducing deadlock under the same assumptions.
Specifically, the planner treats occupied lanes as temporary delays rather than static obstacles whenever the current occupant has committed to departure, avoiding unnecessary detouring loops.
Concurrently, the task dispatcher assigns orders based on predicted lane congestion and trip duration, preventing bottlenecks before traffic accumulates.

\section{Discussion and Conclusion}
\label{sec:discussion}

Across two challenging case studies, PSCS demonstrates that agentic code synthesis can discover algorithms that significantly outperform mature, human-engineered baselines.
On the LoRR 2024 benchmark, PLATM surpasses the top two competition entries across all five instances, achieving throughput gains of up to 35.0\% in extreme congestion.
In simulated industrial warehouses, CGAR establishes a new class of multi-agent routing algorithms, delivering a 58.9\% throughput increase over the proprietary production baseline on WAREHOUSE-01 while operating across all six facilities within a single, uncalibrated configuration.
Together, these findings establish PSCS as a viable, highly effective methodology for complex combinatorial algorithm design.

Both the search methodology and the resulting policies exhibit strong generalization capabilities.
Because competitive LoRR entries rely on map-specific calibration, we evaluate whether PLATM transfers effectively to unseen layouts when provided comparable tuning.
As detailed in Appendix~\ref{app:heldout:lorr}, recalibrating PLATM on a newly generated layout yields throughput within 3.9\% of the competition benchmark across all fleet densities.
In the industrial warehouse domain, CGAR exhibits zero-shot generalization: evaluated on the unseen synthetic layout WAREHOUSE-00 without any parameter adjustment, it sustains 16.98 orders per hour per robot, matching the operational efficiency observed on production facilities of comparable scale (Appendix~\ref{app:heldout:warehouse}).

A primary operational constraint of PSCS is the computational and financial cost of empirical evaluation.
The synthesis of PLATM required 42.3 wall-clock hours and approximately \$700 in API usage, whereas CGAR required 177.8 active development hours and approximately \$4{,}800.
This expenditure is driven largely by the simulation bottleneck: each candidate policy must be evaluated over extended temporal horizons and multiple stochastic realizations before returning informative performance feedback to the coding agent.
Accelerating policy evaluation is therefore a critical research direction, motivating techniques such as temporal parallelization~\citep{farias2025speeding}, importance sampling, and efficient surrogate simulators.

A notable trait of policies synthesized through PSCS is their accumulated software complexity.
Whereas human engineers prioritize modularity, architectural symmetry, and maintainability, the coding agent optimizes strictly for empirical performance.
As the agent resolves successive edge cases and failure modes, the codebase accumulates tightly coupled heuristics, yielding monolithic implementations that are challenging for developers to audit or modify.
Future iterations of PSCS should incorporate software engineering best practices into the search objective, such as penalizing cyclomatic complexity or introducing periodic refactorization.

We also note a methodological consideration in our experimental execution: the policies in Sections~\ref{sec:lorr} and~\ref{sec:warehouse} were developed through organic, human-in-the-loop steering rather than a fully autonomous, hardcoded search harness.
It remains an open question to what extent rigidly encapsulating search logic within a static agentic scaffold risks overfitting to the limitations of a specific model release.
Systematically characterizing the optimal boundary between programmatic search scaffolding and open-ended, model-driven steering represents an important direction for future inquiry.

Looking forward, PSCS represents a compelling paradigm for combinatorial algorithm design in verifiable domains.
Like traditional software engineering, PSCS produces source code that incurs zero neural inference latency at runtime and admits formal safety verification under stated assumptions, as demonstrated by the progress certificates of CGAR (Theorem~\ref{thm:cgar}).
Simultaneously, like deep learning, it scales with compute and optimizes directly against empirical loss landscapes through automated iteration.
By uniting the formal verifiability of symbolic programs with the scalability of data-driven optimization, PSCS offers a practical alternative to both manual engineering and end-to-end learning.
In this paradigm, code serves as the new weights.

\subsection*{AI Use Statement}

Generative AI is part of the subject of this paper.
Coding agents wrote, tested, and diagnosed the policy code of both case studies.
Codex Astra did this work for PLATM.
For CGAR, Codex 5.6 Sol and Codex Astra did most of it, Fable 5.1 contributed two mechanisms, and Fable 5 and Opus 5 gave feedback (Sections~\ref{sec:lorr:problem_setup} and~\ref{sec:warehouse:problem_setup}).
The authors set the objectives and constraints and directed the work, and the final results passed the validity checks of Section~\ref{sec:method}.
We also used AI assistants to help draft and edit the text, to produce figures and tables from our experiment records, and to check citations against their primary sources.
The authors reviewed all AI-assisted text, figures, and code.
We take responsibility for the final content of this paper, including all claims.

\bibliography{paper/references}
\bibliographystyle{iclr2027_conference}

\clearpage
\appendix
\ifpapertmlr\else\raggedbottom\fi
\section{The CGAR Algorithm}
\label{app:cgar}

CGAR is briefly described in Section~\ref{sec:warehouse}.
This appendix gives the full algorithm (Section~\ref{app:alg}).
It proves that every execution is collision-free and that every goal the dispatcher does not withdraw or replace is completed in finite time (Section~\ref{app:proof}).

\subsection{Overview}
\label{app:overview}

CGAR is a PIBT-style planner ~\citep{okumura2022priority} for robots with continuous geometry that move asynchronously.
The dispatcher assigns orders to robots and supplies their successive pickup and drop-off goals.
CGAR routes each robot to its current goal using two layers.
The \emph{throughput layer} proposes motions using priority inheritance and heuristics.
The \emph{safety layer} decides which proposals are committed, and supplies fallback motions when no proposal can be committed.
This follows the Simplex architecture~\citep{sha2001using}: an unverified heuristic may run freely as long as a safe fallback remains available.

During initialization, CGAR selects ground poses called \emph{core nodes} and connects them by \emph{core edges}.
A core edge corresponds to a fixed sequence of physical motions, which we call a \emph{core path}.
Together, core nodes and edges form the \emph{core graph}.
The robots are \emph{on the core graph} when every robot occupies a different core node.
The safety layer rests on one invariant: \emph{at every moment, all robots can be brought back to the core graph}.
CGAR commits a motion only if two conditions hold.
First, the motion is collision-free.
Second, after the motion is committed, there exists a \emph{recovery plan}: a plan that brings all robots back to the core graph.
These two checks maintain the invariant.
Once the robots are on the core graph, and there are more core nodes than robots, any robot can reach any core node by shifting other robots along a vacancy, like a sliding puzzle.

CGAR prioritizes the robot with the oldest active goal.
If that goal remains active for $D$ seconds after becoming the oldest, CGAR switches to the fallback the next time it selects motions.
It pauses normal routing, lets ongoing motions finish, and executes the recovery plan to return all robots to the core graph.
It then moves the selected robot to its goal, shifting other robots into vacant core nodes as needed.

\subsection{Algorithm}
\label{app:alg}

We use the configuration graph $G=(V,E)$ and the conflict map $\Gamma$ of Sections~\ref{sec:problem_setup} and~\ref{sec:warehouse:problem_setup}.
Each vertex in $V$ represents a robot configuration defined by the robot's position, orientation, and service status.
Each edge in $E$ represents a feasible motion between configurations, such as lane traversal, in-place rotation, or vertical movement.
Execution is asynchronous: robots start and finish motions at different times.
Actual traversal times may differ from planning estimates because execution accounts for acceleration and braking.
CGAR selects motions when a motion or service finishes, when the dispatcher assigns, replaces or withdraws a goal, or at a periodic controller tick (1 s by default).

A path is a short sequence of motions of one robot, and each robot executes at most one committed path at a time.
A new path is committed only if two conditions hold.
\begin{itemize}
  \item[(S1)] The volume it sweeps, the union of $S(e)$ over its motions $e$ (Section~\ref{sec:warehouse:problem_setup}), is disjoint from the volume swept by every other path in progress, so the two cannot collide, whatever their timing.
  \item[(S2)] It does not intersect $S(u)$ for any robot standing still at a vertex $u$.
\end{itemize}

The \emph{core graph} is specified by the triple \((B, H, P)\).
The set $B\subseteq V$ consists of ground poses of \(G\) (a position on the floor and an orientation), called \emph{core nodes}, and \(H\) and \(P\) are directed graphs on $B$.
\begin{itemize}
  \item The \emph{support graph} $H=(B,E_H)$.
    Each \emph{core edge} $u\to v$ has an associated \emph{core path}: a fixed path from $u$ to $v$ in \(G\) whose edges are lane traversals and in-place rotations.
  \item The \emph{promotion graph} $P=(B,E_P)$, with $E_P\subseteq E_H$.
\end{itemize}
Here, $E_H$ and $E_P$ are the edge sets of $H$ and $P$, respectively.
The two graphs have different roles.
Core edges of $H$ are used to move robots out of the way, and core edges of $P$ are used to move a robot toward its goal.
$P$ may leave out core edges of $H$ that are safe for the first use but not for the second (condition (K5) below).

During initialization, CGAR builds the core graph automatically from the map and checks the following five conditions for a fleet of $n$ robots.
\begin{itemize}
  \item[(K1)] \emph{Separation.} Robots standing at distinct core nodes do not overlap: $S(b)\cap S(b')=\emptyset$ for any two distinct core nodes $b,b'\in B$.
  \item[(K2)] \emph{Clear core paths.} The core path for every core edge $u\to v$ is free of conflicts with any core node other than $u$ and $v$.
  \item[(K3)] \emph{Connectivity.} $H$ and $P$ are strongly connected.
  \item[(K4)] \emph{Vacancy.} The core graph contains more nodes than robots: $|B|\ge n+1$.
    Thus, whenever all robots are on the core graph, at least one core node is vacant.
  \item[(K5)] \emph{Room to advance.} For every core edge $u\to v$ of $P$, at least $n$ core nodes can be reached from $v$ in $H$ without passing through $u$.
\end{itemize}
The last condition ensures that while a robot stands at $u$, at least one core node reachable from $v$ is vacant.
The robots between $v$ and that core node can then shift toward it one at a time without passing through $u$, which frees $v$ for the robot at $u$.

The checks passed on all six tested warehouses.
Should any check fail, CGAR could not be applied.
Figure~\ref{fig:core} shows a core graph projected onto WAREHOUSE-00's ground network.

\begin{figure}[t]
    \centering
    \includegraphics[width=0.55\linewidth]{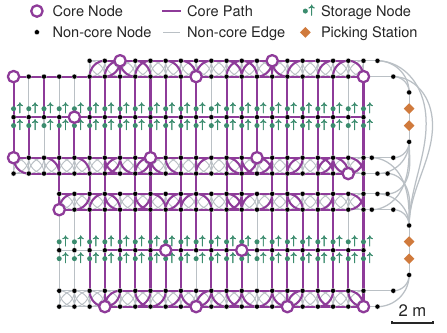}
    \caption{The core graph of WAREHOUSE-00, where $H=P$.
      Hollow purple circles mark its 17 core nodes; purple curves show core paths projected onto the ground network, omitting headings and rotations.}
    \label{fig:core}
\end{figure}

A \emph{recovery plan} is a sequence of steps that returns all robots to the core graph after all currently committed paths finish.
Each step is a set of paths, at most one per robot, that satisfy (S1) and (S2) with respect to each other and to every robot standing still, so they can run at the same time.
CGAR always stores a recovery plan.
CGAR commits new candidate paths only if a recovery plan exists from the robots' positions after all previously committed paths and the candidate paths finish.
CGAR searches for such a plan with a bounded search and checks it by simulating every step.
If no plan is found, the new paths are rejected.

The dispatcher assigns each robot one goal at a time, consisting of a target vertex to reach and, if required, a service to perform.
Each newly assigned goal receives a ticket from an increasing counter, so smaller tickets correspond to older goals.

At each routing decision, every robot proposes candidate paths, such as prefixes of its shortest route, pushes of a blocking robot as in PIBT, and moves along the core graph.
CGAR visits the robots in ticket order, starting with the robot with the oldest active goal, and adds each robot's best candidate that satisfies (S1) and (S2) with respect to the paths chosen so far.
It commits the chosen paths if a recovery plan exists, and otherwise removes paths until one does.
If nothing can be committed, it uses the steps of the fallback described below, without stopping other robots.

The throughput layer employs heuristics that govern which candidate paths are proposed and how they are ranked:
\begin{itemize}
  \item A robot blocked only by traffic that will clear by itself waits instead of detouring;
  \item Robots blocked at a service location back off to the core graph and retry;
  \item The robot with the oldest active goal can be overtaken only a bounded number of times before other robots must yield to it;
  \item Helper robots making way for this robot may move in parallel when this does not delay it.
\end{itemize}
These heuristics change only which paths are proposed and in what order.
Every committed path still passes the same checks.

When a goal becomes the oldest active goal, its fallback deadline is set to $D$ seconds later.
At the first routing decision at or after this deadline, if the goal remains active, CGAR pauses normal routing and starts the following fallback for the robot with that goal.
No new work outside the fallback is committed until the goal is completed, withdrawn or replaced.
\begin{enumerate}
  \item \emph{Drain.} Wait for committed motions to finish.
  \item \emph{Recover.} Execute the stored recovery plan step by step.
  \item \emph{Promote.} Move the robot with the oldest active goal along a shortest path in $P$ to the \emph{entry node} of its goal, a core node from which it can reach its goal and return without conflicting with any other core node.
    For each core edge $u\to v$ of this path, the robots between $v$ and a vacant core node first move one core node forward, one at a time, as described after (K5).
    This frees $v$ for that robot to move from $u$ to $v$.
  \item \emph{Serve.} Move the robot from its entry node to its goal, and keep the path back to the entry node as the recovery plan.
\end{enumerate}
If the fallback cannot construct a step, execution stops with an error rather than returning to heuristic routing.
Figure~\ref{fig:cgar-flow} summarizes one decision.

\begin{figure}[t]
\centering
\begin{tikzpicture}[
  x=1cm, y=1cm,
  font=\normalfont\sffamily\fontsize{9}{11}\selectfont,
  every node/.style={text=black!85, align=center, inner sep=3pt},
  action/.style={rectangle, draw=black!35, fill=white, line width=0.65pt,
                 rounded corners=1.5pt, minimum width=3.0cm, minimum height=8mm},
  gate/.style={diamond, aspect=2, minimum width=3.4cm, minimum height=1.7cm,
               draw=lorrBlue!80!black,
               fill=lorrBlue!4, line width=0.75pt, inner sep=1.5pt},
  link/.style={draw=lorrBlue, line width=0.8pt, rounded corners=3pt},
  flow/.style={link, -{Computer Modern Rightarrow},
               shorten <=0pt, shorten >=0pt},
  branch/.style={font=\normalfont\sffamily\fontsize{8}{10}\selectfont,
                 text=black!70, inner sep=1.5pt},
  phase/.style={inner xsep=1pt, inner ysep=3pt},
]
\node[action] (update) at (0,0)
  {Update goals; select\\oldest active goal};
\node[gate, below=3.5mm of update] (deadline)
  {Oldest for\\$D$ seconds?};
\node[action, below=4.5mm of deadline] (cand)
  {Choose heuristic plan};
\node[gate, below=3.5mm of cand] (check)
  {Safe and\\recoverable?};
\node[action, below=4.5mm of check,
      draw=lorrBlue!55!black, fill=lorrBlue!4]
  (commit) {Commit plan};

\draw[flow] (update.south) -- (deadline.north);
\draw[flow] (deadline.south) -- node[branch,right]{N} (cand.north);
\draw[flow] (cand.south) -- (check.north);
\draw[flow] (check.south) -- node[branch,right]{Y} (commit.north);

\node[action, minimum width=2.6cm] (stop) at (3.65,0 |- deadline.center)
  {Pause\\normal routing};
\draw[flow] (deadline.east) -- node[branch,above]{Y} (stop.west);

\coordinate (fbtop) at (stop.center |- cand.north);
\coordinate (fbbottom) at (stop.center |- commit.south);
\coordinate (fbwest) at ([xshift=-1.3cm]fbtop);
\begin{scope}[on background layer]
  \path[fill=lorrBlue!4, draw=lorrBlue!20, line width=0.6pt,
        rounded corners=2pt]
    (fbwest) rectangle ([xshift=1.3cm]fbbottom);
\end{scope}
\node[text=lorrBlue!80!black, anchor=north, inner sep=0pt]
  at ([yshift=-2.5mm]fbtop) {Fallback};
\draw[lorrBlue!20, line width=0.5pt]
  ([xshift=-1.1cm,yshift=-6.5mm]fbtop) --
  ([xshift=1.1cm,yshift=-6.5mm]fbtop);
\node[phase] (drain) at ($(fbtop)!0.25!(fbbottom)$)
  {Let motions finish};
\node[phase] (recover) at ($(fbtop)!0.47!(fbbottom)$)
  {Return to\\core graph};
\node[phase] (promote) at ($(fbtop)!0.69!(fbbottom)$)
  {Shift and advance};
\node[phase] (serve) at ($(fbtop)!0.91!(fbbottom)$)
  {Complete goal};
\foreach \from/\to in {drain/recover,recover/promote,promote/serve}
  \draw[flow] (\from.south) -- (\to.north);

\draw[flow] (stop.south) -- (fbtop);
\draw[flow] (check.east) -- node[branch,above]{N} (fbwest |- check.east);
\end{tikzpicture}
\caption{CGAR motion selection.
Committed motions must be collision-free and preserve a recovery plan.
If a goal remains the oldest active goal for $D$ seconds, normal routing pauses for the fallback.}
\label{fig:cgar-flow}
\end{figure}

\subsection{Proof of Safety and Goal Completion}
\label{app:proof}

Safety follows from the swept-volume checks, while induction over routing decisions establishes that a recovery plan remains available.
For progress, the deadline-triggered fallback completes the oldest active goal in finite time using vacancy shifts, and ticket ordering extends this guarantee to every goal not withdrawn or replaced.

Besides (K1)--(K5), which are checked during initialization, the proof uses three assumptions.
\begin{itemize}
  \item[(E1)] \emph{Finite motions.} Every motion or service, once started, finishes in finite time.
  \item[(E2)] \emph{Recoverable start.} The robots start at non-overlapping positions with a recovery plan, for example all on the core graph.
  \item[(E3)] \emph{Service coverage.} Every goal has an entry node: a core node $b$ with a path from $b$ to the goal and back that conflicts with no other core node.
    \footnote{On four of the six warehouses, (E3) holds for every goal.
    On WAREHOUSE-01 and WAREHOUSE-03, a few storage nodes (16 and 5, respectively) satisfy (E3) only if one or two additional core nodes are kept vacant.
    The argument extends to this case by first vacating those core nodes, but to keep the proof simple we state it under (E3).}
\end{itemize}
The proof also takes the runtime checks of (S1), (S2) and of the recovery plan to be correct as implemented.

The first two lemmas establish safety and preservation of a recovery plan.

\begin{lemma}[Safe Commitment]
\label{lem:safe}
Suppose the robots start at non-overlapping positions, and every committed path satisfies (S1) and (S2) when it is committed.
Then no two robots ever overlap, however long each motion takes.
\end{lemma}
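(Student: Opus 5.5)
The plan is to argue by induction over the event times at which something changes: a routing decision, the start of a path, the end of a path, or the start or end of a service. Between consecutive events the set of committed paths is fixed. The invariant I will maintain is the following.
\begin{quote}
(I) For any two robots $i \neq j$, the region $R_i(t)$ that $i$ may occupy is disjoint from $R_j(t)$.
\end{quote}
Here $R_i(t)$ is defined as follows. If robot $i$ is standing still at vertex $u$, then $R_i(t)=S(u)$. If robot $i$ is executing a committed path, then $R_i(t)$ is the swept volume of the entire path, that is, the union of $S(e)$ over its motions $e$, including the volumes of its start and end vertices. The physical footprint of robot $i$ at every instant lies inside $R_i(t)$, whatever the timing of its motions. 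This holds because the robot occupies only $S(e)$ for the motion $e$ in progress, or $S(v)$ for the intermediate vertex $v$ where it rests. So (I) implies that no two robots overlap, independent of motion durations.

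For the base case, the robots start at non-overlapping positions, so all $R_i(0)=S(u_i)$ are pairwise disjoint. For the inductive step there are three kinds of events.

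\textbf{Commit.} A new path for robot $i$ is committed. Its swept volume is disjoint from every path in progress by (S1), and from $S(u)$ for every stationary robot by (S2). Within a single decision, several paths are committed together, and each was checked against the ones chosen before it. So the new regions are pairwise disjoint and disjoint from all old ones.

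\textbf{Completion.} Robot $i$ finishes its path at vertex $v$. Its region shrinks from the swept volume to $S(v)$, which is a subset of it, so disjointness is preserved.

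\textbf{Service.} Starting or ending a service does not move the robot, or moves it only along an edge that was itself committed as part of a path. Either way it is covered by one of the two previous cases.

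The step I expect to be the main obstacle is making precise that the swept volume contains the start and end vertex volumes. Concretely, the robot occupies $S(u)$ at the departure tick and $S(v)$ at the arrival tick, and this is the convention of Section~\ref{sec:problem_setup}. I will use that convention, or equivalently assume $S(u)\cup S(v)\subseteq S(e)$ for $e=(u,v)$, so that the check in (S1) genuinely covers every instant of a path, including pauses between its motions. With this, asynchrony is harmless: (S1) compares entire swept volumes rather than timed trajectories, so the conclusion holds however long each motion takes.
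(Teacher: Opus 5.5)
Your proposal is correct and uses the same argument as the paper. Each robot's footprint stays inside either the swept volume of its current committed path (which contains the path's endpoints) or the volume of the vertex it stands at; (S1) and (S2) make that region disjoint from every other robot's region at commit time, and completing a path only shrinks the region to $S(v)$. Your event-driven induction with invariant (I) is a more explicit version of the paper's three-sentence proof, and your assumption $S(u)\cup S(v)\subseteq S(e)$ is exactly what the paper relies on when it says the swept volume ``includes the path's end.''
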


\begin{proof}
A robot never leaves the volume swept by its last committed path, which includes the path's end.
By (S1), this volume is disjoint from that of every path in progress when the path was committed, and by (S2), from every robot standing still at that time.
Paths committed later are checked against this one, or against the robot standing at its end, in the same way.
\end{proof}

\begin{lemma}[Recovery Is Always Possible]
\label{lem:invariant}
CGAR always stores a recovery plan.
\end{lemma}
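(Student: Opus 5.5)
We argue by induction over the sequence of events at which CGAR changes its committed state: routing decisions that commit paths, completions of committed paths or services, dispatcher updates, and the individual steps of the fallback. The invariant to carry is slightly stronger than the statement. At every moment, CGAR stores a plan $R$ such that, starting from the positions the robots will occupy once all currently committed paths finish, the steps of $R$ can be executed in order. Each step satisfies (S1) and (S2) with respect to the other paths of the step and to the robots standing still. After the last step, every robot stands at a distinct core node.

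The base case is (E2), which supplies non-overlapping start positions together with a recovery plan.

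For the inductive step we distinguish the kinds of events.

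\emph{Commitments during normal routing.} A set of new paths is committed only if the bounded search finds a recovery plan from the positions after all previously committed paths and the candidate paths finish. That plan is validated by simulating every step, and we take the runtime checks to be correct as implemented. So the stored plan is replaced by a valid one. If no plan is found, nothing is committed and the old plan remains valid, because the set of committed paths is unchanged.

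\emph{Path completions and goal updates.} When a committed path finishes, the robot's position becomes exactly the end position that was already assumed when the plan was computed. The plan therefore stays valid with the same starting configuration. Goal assignment, replacement, or withdrawal changes no physical state, so the plan is unaffected. The only subtlety is a robot that is serving. Its service must be treated as a committed motion that finishes in finite time by (E1). The plan is then defined from its post-service configuration, which is its current vertex pose with an updated service status.

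\emph{Fallback.} The fallback phases need their own argument.
\begin{itemize}
\item During Drain, no new paths are committed, so the plan stays valid as above.
\item During Recover, executing step $k$ of $R$ leaves the suffix $R_{k+1},\dots$ as a valid plan from the new configuration. Once all steps are done, the robots are on the core graph and the empty plan suffices.
\item During Promote, every vacancy shift and every advance moves a single robot along a core path between core nodes. The target core node is vacant at that moment, and (K2) guarantees the core path conflicts with no other occupied core node. After each such move all robots are again at distinct core nodes, so the empty plan is a recovery plan.
\item During Serve, the stored plan is the path from the goal back to the entry node. By (E3) this path conflicts with no other core node, so it satisfies (S1) and (S2) against the other robots standing at core nodes, and it ends with all robots on the core graph.
\end{itemize}

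The main obstacle is making the Promote step rigorous. We must show that a vacancy-shift sequence satisfying (S1) and (S2) actually exists, rather than assuming it. For this we use (K4) and (K5): while the robot stands at $u$, at least $n$ core nodes are reachable from $v$ in $H$ avoiding $u$. Only $n-1$ other robots exist, so some reachable node $w$ is vacant. Take a path $v=x_0\to\cdots\to x_m=w$ in $H$ that avoids $u$, and shift the robots backwards along it, from $x_{m-1}$ to $x_m$ first and so on back to $x_0\to x_1$. Each shift moves a single robot into a node that is vacant at that moment, so it is conflict-free by (K1) and (K2).

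A secondary point to handle is the asynchrony in the Drain-to-Recover transition. After all motions finish, the configuration coincides with the plan's assumed start. Each recovery step is swept-volume disjoint, so it is executable regardless of timing, which is exactly the setting of Lemma~\ref{lem:safe}.
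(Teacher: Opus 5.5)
Your proof is correct and follows essentially the paper's argument: induction over decisions with base case (E2), the runtime recoverability check for normal routing, the remaining suffix of the stored plan for Recover, (K1)--(K2) for Promote, (E3) for Serve, and nothing committed during Drain. The vacancy-shift existence argument you call the main obstacle is not needed for this invariant: only steps that are actually committed matter, and the fallback stops with an error if it cannot construct a step. The paper puts that argument in Lemma~\ref{lem:shift}, where it serves the progress proof.
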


\begin{proof}
Initially, a recovery plan exists by (E2).
We show that every decision keeps one.
Between decisions, robots only carry out committed paths, so the stored plan stays valid.
At a decision, CGAR commits one of the following.
\begin{itemize}
  \item While normal routing is not paused, paths are committed only if a recovery plan exists afterwards.
  \item Recover executes the next step of the stored plan.
    The remaining steps form a recovery plan.
  \item Promote moves one robot along a core path while all robots are on the core graph.
    By (K1) and (K2), all robots are on the core graph again once it finishes, so no further recovery is needed.
  \item Serve moves the robot with the oldest active goal from its entry node to its goal while all other robots are on the core graph.
    By (E3), this conflicts with no other core node, and the path back to the entry node is a recovery plan.
\end{itemize}
Drain commits no new paths.
\end{proof}

We next show that the fallback completes its goal in finite time.

\begin{lemma}[Vacancy Shift]
\label{lem:shift}
Let the robots be on the core graph, let robot $r$ stand at core node $u$, and let $u\to v$ be a core edge of $P$.
Then at most $n-1$ other robots can each shift one core node along a path in $H$ that avoids $u$, after which $r$ moves to $v$.
This executes at most $n$ core paths, one at a time, and the robots are on the core graph after each core path.
\end{lemma}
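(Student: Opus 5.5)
We use (K5) to find a vacant core node reachable from $v$ while avoiding $u$, and then shift robots toward it along a path in $H$, like a sliding puzzle.

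Let $R\subseteq B$ be the set of core nodes reachable from $v$ in $H$ without passing through $u$, with $u\notin R$. By (K5), $|R|\ge n$. Since $r$ occupies $u\notin R$, at most $n-1$ robots stand in $R$. Hence some node of $R$ is vacant.

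If $v$ itself is vacant, $r$ simply traverses the core path of $u\to v$ and we are done. Otherwise, take a shortest path $v=w_0\to w_1\to\cdots\to w_k$ in $H$ that avoids $u$ and ends at a vacant node $w_k$. Because it is shortest among paths reaching a vacant node, every intermediate node $w_0,\dots,w_{k-1}$ is occupied. The path is simple and avoids $u$, so $w_0,\dots,w_{k-1}$ are distinct occupied nodes of $R$, giving $k\le n-1$.

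The shifts then proceed back to front. For $j=k-1,k-2,\dots,0$, the robot at $w_j$ traverses the core path of the core edge $w_j\to w_{j+1}$. Its target $w_{j+1}$ is vacant at that moment: for $j=k-1$ this holds by the choice of $w_k$, and for smaller $j$ because $w_{j+1}$ was just vacated. After these $k$ shifts $v=w_0$ is vacant, and $r$ traverses $u\to v$. In total this executes $k+1\le n$ core paths, one at a time, and each shifting robot moves one core node along an $H$-path avoiding $u$.

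It remains to check that each single core-path execution is collision-free and returns the robots to the core graph; this is the only step needing care. During a single move $w\to w'$, every other robot stands still at a core node other than $w$ and $w'$: the mover has left $w$, and $w'$ was vacant. By (K2) the core path conflicts with none of these nodes, so (S2) holds and nothing else is in progress, giving (S1). When the move ends, the mover stands at the vacant core node $w'$, so all robots again occupy distinct core nodes, and (K1) ensures these are non-overlapping. The robots are therefore on the core graph after each core path, as required. The only thing to verify is that "conflicts with any core node" in (K2) is read as conflicts with a robot's volume $S(b)$ standing there. Lemma~\ref{lem:safe} then gives that no overlap occurs during the execution.
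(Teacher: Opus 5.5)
Your proof is correct and follows the paper's argument essentially step for step. (K5) plus the pigeonhole count gives a vacant node; a shortest $u$-avoiding path to the nearest vacancy has all earlier nodes occupied, so $k\le n-1$; robots shift back to front, and (K1)--(K2) make each single core-path move conflict-free and return the robots to the core graph, with your extra details (why the intermediate nodes are distinct and occupied, the separate $k=0$ case) only making explicit what the paper leaves implicit.
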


\begin{proof}
By (K5), at least $n$ core nodes are reachable from $v$ in $H$ without passing through $u$.
At most $n-1$ robots other than $r$ stand on them, so at least one of them is vacant.
Let $h$ be a vacant one closest to $v$, and take a shortest path $v=q_0,\dots,q_k=h$ in $H$ that avoids $u$.
Every core node on this path before $h$ is occupied, so $k\le n-1$.
For $j=k,\dots,1$, if a robot stands at $q_{j-1}$, move it to $q_j$, which is vacant at that point.
Then $v$ is vacant, and $r$ moves from $u$ to $v$.
This uses at most $k+1\le n$ core paths.
Each core path moves one robot while the others stand on core nodes, so by (K1) and (K2) it causes no conflict and ends with the robots on the core graph.
\end{proof}

\begin{lemma}[The Fallback Completes the Goal]
\label{lem:fallback}
Once CGAR pauses normal routing for the fallback to complete a goal, that goal is completed in finite time unless it is withdrawn or replaced.
\end{lemma}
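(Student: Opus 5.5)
The plan is to follow the four phases of the fallback in order and show that each one ends after finitely many committed motions, each of which finishes in finite time by (E1). Throughout, we assume the goal is neither withdrawn nor replaced; otherwise there is nothing to prove. Once normal routing is paused, no new work outside the fallback is committed, so the only motions in the system are previously committed paths and fallback steps. The robot $r$ holding the goal stays fixed, because its ticket is the oldest and no newer goal can overtake it while routing is paused.

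\emph{Drain.} At the pause, each robot has at most one committed path in progress, and each path is a finite sequence of motions. By (E1), every such path ends in finite time, so Drain terminates. Lemma~\ref{lem:invariant} then provides a stored recovery plan that is valid from the resulting positions.

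\emph{Recover.} The stored plan consists of finitely many steps. Each step is a set of paths that jointly satisfy (S1) and (S2), so each can be committed, and by (E1) each finishes in finite time. When the last step ends, all robots are on the core graph, and Lemma~\ref{lem:safe} guarantees that no collision occurs along the way.

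\emph{Promote.} By (E3), the goal has an entry node $b$. By (K3), $P$ is strongly connected, so there is a shortest path $u_0\to u_1\to\dots\to u_m=b$ in $P$, starting at the core node $u_0$ occupied by $r$, with $m<|B|$. We induct on $i$, maintaining that all robots are on the core graph with $r$ at $u_i$. For the edge $u_i\to u_{i+1}$, Lemma~\ref{lem:shift} gives at most $n$ core paths, each executed one at a time, after which $r$ stands at $u_{i+1}$ and the robots are again on the core graph. Each of these core paths passes (S1) and (S2), since the other robots stand at core nodes and (K1) and (K2) apply, so CGAR can commit it. By (E1), each one finishes in finite time. Promote therefore uses at most $mn$ core paths.

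\emph{Serve.} By (E3), the path from $b$ to the goal conflicts with no other core node, and all other robots are standing on core nodes. The path therefore passes (S1) and (S2) and can be committed. By (E1), the motions and the service finish in finite time, and the goal is completed.

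The main obstacle is not the counting but making sure no phase gets stuck. Specifically, we must check two things: that the fallback can always construct its next step, so that the error branch is never reached, and that the fallback itself does not restart because of newly arriving goals or new deadlines. The first point reduces to the lemmas applied above. For the second, we will argue that $r$ keeps the oldest active goal, since tickets only increase, and that the pause rule in the algorithm forbids any interleaving of other work until the goal is completed.
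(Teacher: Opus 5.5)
Your proposal is correct and follows essentially the same route as the paper. Ticket monotonicity keeps the goal the oldest while only fallback steps are committed; Drain and Recover then terminate by (E1) and the stored recovery plan; Promote uses at most $(|B|-1)n$ core paths by (K3) and the Vacancy Shift lemma; and Serve succeeds by (E3). The one difference is that you check (S1) and (S2) for the fallback steps inside this lemma, whereas the paper does that check in the safety part of the main theorem's proof.
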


\begin{proof}
New goals receive larger tickets, so the goal being served by the fallback remains the oldest active goal until it ends.
Only fallback steps are committed while normal routing is paused, and each step finishes in finite time by (E1).
\begin{itemize}
  \item \emph{Drain:} all committed paths finish.
  \item \emph{Recover:} the stored recovery plan (Lemma~\ref{lem:invariant}) brings all robots back to the core graph.
  \item \emph{Promote:} by (K3), the entry node is at most $|B|-1$ core edges of $P$ away, so by Lemma~\ref{lem:shift} at most $(|B|-1)\,n$ core paths bring the robot with the oldest active goal there.
  \item \emph{Serve:} by (E3), the path from the entry node to the goal conflicts with no other core node, so the robot reaches its goal and performs the service.
\end{itemize}
\end{proof}

Together, these lemmas yield the main guarantee, as stated in the following theorem:

\begin{theorem}
\label{thm:cgar}
Under (K1)--(K5) and (E1)--(E3), every execution of CGAR is collision-free, a recovery plan exists at every decision, and every goal is completed in finite time unless the dispatcher withdraws or replaces it.
\end{theorem}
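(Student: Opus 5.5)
Theorem~\ref{thm:cgar} makes three claims, and I would prove them in the order stated, treating the first two as direct corollaries and spending the real effort on the third.

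\emph{Collision-freedom.} Every path CGAR commits passes the checks (S1) and (S2). This covers all three sources of commitments: those in normal routing, the steps of a recovery plan, and the fallback steps. A recovery plan step is checked against (S1) and (S2) by definition. Promote and Serve are covered by (K1), (K2) and (E3). By (E2), the robots start at non-overlapping positions. Lemma~\ref{lem:safe} then gives that no two robots ever overlap, whatever the actual motion durations.

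\emph{Existence of a recovery plan.} This is exactly Lemma~\ref{lem:invariant}, whose base case is (E2) and whose inductive step runs over routing decisions.

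\emph{Goal completion.} Fix a goal $g$ that is never withdrawn or replaced. I would argue by strong induction on its ticket number, over the finitely many active goals with smaller tickets. Tickets are increasing, so only finitely many goals older than $g$ exist.
\begin{itemize}
\item By the induction hypothesis, each older goal is completed, withdrawn or replaced in finite time. Hence at some finite time $t_0$, either $g$ is already complete or $g$ is the oldest active goal.
\item Once $g$ is oldest, it stays oldest until it ends, because later goals receive larger tickets. Its deadline is therefore fixed at $t_0 + D$.
\item Next I must show that a routing decision occurs at or after $t_0 + D$. The periodic controller tick (1\,s) guarantees this, provided CGAR is still making decisions. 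If CGAR is stuck in some other goal's fallback, that fallback ends in finite time by Lemma~\ref{lem:fallback}. It cannot belong to an older goal, since none remains active, so this case reduces to the same bound.
\item At that decision, either $g$ is already complete or CGAR pauses normal routing for $g$. In the latter case, Lemma~\ref{lem:fallback} completes $g$ in finite time.
\end{itemize}

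The main obstacle is the liveness gap between goals. After one fallback ends, I must ensure the next oldest goal again becomes oldest and triggers its own fallback, rather than being starved. I must also ensure that no fallback terminates with the error branch. I would close the first gap with the ticket monotonicity and periodic-tick argument above. For the second, I would note that the proof of Lemma~\ref{lem:fallback} constructs every step explicitly: Drain by (E1), Recover via Lemma~\ref{lem:invariant}, Promote via (K3) with Lemma~\ref{lem:shift}, and Serve via (E3). Under the stated assumptions, the error branch is therefore never reached.

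A subtle point is a goal that becomes oldest while another goal's fallback is running. I would handle it by reading the deadline rule as ``first routing decision at or after the deadline,'' so a late trigger still eventually fires. A final subtlety is whether finitely many older goals exist at the time $g$ is assigned; that follows because tickets come from an increasing counter.
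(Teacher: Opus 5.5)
Your proposal is correct and follows essentially the same route as the paper. Safety comes from Lemma~\ref{lem:safe}, after checking that normal, recovery, Promote and Serve commitments all satisfy (S1)--(S2); the recovery plan comes from Lemma~\ref{lem:invariant}; and progress comes from letting the finitely many older goals end one after another until $g$'s own deadline triggers the fallback of Lemma~\ref{lem:fallback}. The only differences are cosmetic. The paper bounds $g$'s predecessors by the at most $n-1$ goals active when $g$ was accepted, rather than using strong induction on integer tickets, and your remarks on the periodic tick and the unreachable error branch state explicitly what the paper leaves implicit.
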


\begin{proof}
We first show safety.
Paths committed in normal operation are checked for (S1) and (S2).
Fallback steps satisfy them too: recovery steps by definition, core paths by (K1) and (K2), and the path to a goal by (E3).
Lemma~\ref{lem:safe} then gives safety, and Lemma~\ref{lem:invariant} gives the recovery plan.

Next, we prove progress guarantee.
Let $g$ be a goal that is never withdrawn or replaced.
New goals receive larger tickets, so while $g$ is active, the oldest active goal is $g$ or one of the at most $n-1$ goals that were active when $g$ was accepted.
Each such goal ends within $D$ seconds of becoming the oldest active goal, or CGAR pauses normal routing for it at the first routing decision at or after its fallback deadline and it ends by Lemma~\ref{lem:fallback}.
At most $n-1$ older goals can precede $g$, and each ends in finite time.
Thus $g$ is completed in finite time.
\end{proof}

\clearpage
\section{Evaluation on Additional Layouts}
\label{app:heldout}

\subsection{PLATM on RANDOM-06 to RANDOM-10}
\label{app:heldout:lorr}

We generate one new $32\times32$ layout with the size and obstacle density of the competition map, and reuse the five fleet sizes and horizons of Section~\ref{sec:lorr:problem_setup} to obtain RANDOM-06 to RANDOM-10 (Figure~\ref{fig:heldout:random}).

\par\medskip\noindent
\begin{minipage}{\linewidth}
\centering
\begin{minipage}{0.195\linewidth}\centering
\resizebox{\linewidth}{!}{\begin{tikzpicture}[x=0.071cm,y=0.071cm]
\path[fill=white,draw=black!40,line width=0.2pt] (0,0) rectangle (32,32);
\path[fill=black!62] (1,31) rectangle (2,32) (9,31) rectangle (10,32) (28,31) rectangle (29,32) (30,31) rectangle (31,32) (1,30) rectangle (2,31) (4,30) rectangle (5,31) (7,30) rectangle (8,31) (12,30) rectangle (13,31) (14,30) rectangle (15,31) (16,30) rectangle (17,31) (21,30) rectangle (22,31) (30,30) rectangle (31,31) (8,29) rectangle (9,30) (12,29) rectangle (13,30) (15,29) rectangle (16,30) (16,29) rectangle (17,30) (20,29) rectangle (21,30) (24,29) rectangle (25,30) (27,29) rectangle (28,30) (28,29) rectangle (29,30) (29,29) rectangle (30,30) (0,28) rectangle (1,29) (1,28) rectangle (2,29) (5,28) rectangle (6,29) (9,28) rectangle (10,29) (12,28) rectangle (13,29) (15,28) rectangle (16,29) (17,28) rectangle (18,29) (20,28) rectangle (21,29) (22,28) rectangle (23,29) (3,27) rectangle (4,28) (4,27) rectangle (5,28) (13,27) rectangle (14,28) (24,27) rectangle (25,28) (25,27) rectangle (26,28) (2,26) rectangle (3,27) (5,26) rectangle (6,27) (6,26) rectangle (7,27) (12,26) rectangle (13,27) (22,26) rectangle (23,27) (27,26) rectangle (28,27) (3,25) rectangle (4,26) (5,25) rectangle (6,26) (8,25) rectangle (9,26) (12,25) rectangle (13,26) (13,25) rectangle (14,26) (22,25) rectangle (23,26) (24,25) rectangle (25,26) (30,25) rectangle (31,26) (22,24) rectangle (23,25) (28,24) rectangle (29,25) (4,23) rectangle (5,24) (13,23) rectangle (14,24) (14,23) rectangle (15,24) (16,23) rectangle (17,24) (29,23) rectangle (30,24) (1,22) rectangle (2,23) (3,22) rectangle (4,23) (11,22) rectangle (12,23) (13,22) rectangle (14,23) (23,22) rectangle (24,23) (31,22) rectangle (32,23) (0,21) rectangle (1,22) (4,21) rectangle (5,22) (7,21) rectangle (8,22) (8,21) rectangle (9,22) (11,21) rectangle (12,22) (13,21) rectangle (14,22) (14,21) rectangle (15,22) (19,21) rectangle (20,22) (22,21) rectangle (23,22) (29,21) rectangle (30,22) (2,20) rectangle (3,21) (4,20) rectangle (5,21) (7,20) rectangle (8,21) (12,20) rectangle (13,21) (19,20) rectangle (20,21) (4,19) rectangle (5,20) (6,19) rectangle (7,20) (7,19) rectangle (8,20) (20,19) rectangle (21,20) (24,19) rectangle (25,20) (26,19) rectangle (27,20) (28,19) rectangle (29,20) (0,18) rectangle (1,19) (2,18) rectangle (3,19) (7,18) rectangle (8,19) (8,18) rectangle (9,19) (14,18) rectangle (15,19) (18,18) rectangle (19,19) (20,18) rectangle (21,19) (14,17) rectangle (15,18) (19,17) rectangle (20,18) (22,17) rectangle (23,18) (31,17) rectangle (32,18) (0,16) rectangle (1,17) (2,16) rectangle (3,17) (11,16) rectangle (12,17) (18,16) rectangle (19,17) (22,16) rectangle (23,17) (23,16) rectangle (24,17) (24,16) rectangle (25,17) (25,16) rectangle (26,17) (26,16) rectangle (27,17) (28,16) rectangle (29,17) (2,15) rectangle (3,16) (9,15) rectangle (10,16) (18,15) rectangle (19,16) (19,15) rectangle (20,16) (8,14) rectangle (9,15) (11,14) rectangle (12,15) (12,14) rectangle (13,15) (19,14) rectangle (20,15) (29,14) rectangle (30,15) (31,14) rectangle (32,15) (2,13) rectangle (3,14) (14,13) rectangle (15,14) (16,13) rectangle (17,14) (19,13) rectangle (20,14) (20,13) rectangle (21,14) (21,13) rectangle (22,14) (22,13) rectangle (23,14) (27,13) rectangle (28,14) (28,13) rectangle (29,14) (5,12) rectangle (6,13) (8,12) rectangle (9,13) (11,12) rectangle (12,13) (12,12) rectangle (13,13) (18,12) rectangle (19,13) (28,12) rectangle (29,13) (29,12) rectangle (30,13) (9,11) rectangle (10,12) (31,11) rectangle (32,12) (5,10) rectangle (6,11) (14,10) rectangle (15,11) (25,10) rectangle (26,11) (1,9) rectangle (2,10) (2,9) rectangle (3,10) (3,9) rectangle (4,10) (11,9) rectangle (12,10) (13,9) rectangle (14,10) (15,9) rectangle (16,10) (24,9) rectangle (25,10) (27,9) rectangle (28,10) (30,9) rectangle (31,10) (3,8) rectangle (4,9) (15,8) rectangle (16,9) (23,8) rectangle (24,9) (25,8) rectangle (26,9) (7,7) rectangle (8,8) (10,7) rectangle (11,8) (17,7) rectangle (18,8) (21,7) rectangle (22,8) (23,7) rectangle (24,8) (25,7) rectangle (26,8) (26,7) rectangle (27,8) (3,6) rectangle (4,7) (8,6) rectangle (9,7) (10,6) rectangle (11,7) (11,6) rectangle (12,7) (21,6) rectangle (22,7) (23,6) rectangle (24,7) (31,6) rectangle (32,7) (1,5) rectangle (2,6) (5,5) rectangle (6,6) (11,5) rectangle (12,6) (12,5) rectangle (13,6) (14,5) rectangle (15,6) (18,5) rectangle (19,6) (27,5) rectangle (28,6) (28,5) rectangle (29,6) (29,5) rectangle (30,6) (11,4) rectangle (12,5) (19,4) rectangle (20,5) (21,4) rectangle (22,5) (22,4) rectangle (23,5) (24,4) rectangle (25,5) (28,4) rectangle (29,5) (30,4) rectangle (31,5) (13,3) rectangle (14,4) (23,3) rectangle (24,4) (25,3) rectangle (26,4) (27,3) rectangle (28,4) (5,2) rectangle (6,3) (9,2) rectangle (10,3) (10,2) rectangle (11,3) (11,2) rectangle (12,3) (15,2) rectangle (16,3) (21,2) rectangle (22,3) (26,2) rectangle (27,3) (28,2) rectangle (29,3) (0,1) rectangle (1,2) (4,1) rectangle (5,2) (7,1) rectangle (8,2) (10,1) rectangle (11,2) (11,1) rectangle (12,2) (14,1) rectangle (15,2) (17,1) rectangle (18,2) (18,1) rectangle (19,2) (9,0) rectangle (10,1) (16,0) rectangle (17,1) (23,0) rectangle (24,1) (24,0) rectangle (25,1) (26,0) rectangle (27,1) (29,0) rectangle (30,1);
\path[fill=lorrBlue] (1.08,12.08) rectangle (1.92,12.92) (19.08,0.08) rectangle (19.92,0.92) (12.08,23.08) rectangle (12.92,23.92) (3.08,26.08) rectangle (3.92,26.92) (24.08,15.08) rectangle (24.92,15.92) (31.08,20.08) rectangle (31.92,20.92) (2.08,17.08) rectangle (2.92,17.92) (11.08,28.08) rectangle (11.92,28.92) (11.08,29.08) rectangle (11.92,29.92) (27.08,4.08) rectangle (27.92,4.92) (19.08,1.08) rectangle (19.92,1.92) (25.08,30.08) rectangle (25.92,30.92) (21.08,11.08) rectangle (21.92,11.92) (7.08,31.08) rectangle (7.92,31.92) (30.08,18.08) rectangle (30.92,18.92) (13.08,19.08) rectangle (13.92,19.92) (27.08,31.08) rectangle (27.92,31.92) (31.08,21.08) rectangle (31.92,21.92) (23.08,21.08) rectangle (23.92,21.92) (5.08,19.08) rectangle (5.92,19.92) (4.08,22.08) rectangle (4.92,22.92) (2.08,7.08) rectangle (2.92,7.92) (20.08,14.08) rectangle (20.92,14.92) (21.08,1.08) rectangle (21.92,1.92) (4.08,28.08) rectangle (4.92,28.92) (14.08,28.08) rectangle (14.92,28.92) (12.08,3.08) rectangle (12.92,3.92) (15.08,0.08) rectangle (15.92,0.92) (5.08,11.08) rectangle (5.92,11.92) (5.08,24.08) rectangle (5.92,24.92) (2.08,4.08) rectangle (2.92,4.92) (15.08,14.08) rectangle (15.92,14.92) (16.08,16.08) rectangle (16.92,16.92) (10.08,28.08) rectangle (10.92,28.92) (28.08,30.08) rectangle (28.92,30.92) (26.08,25.08) rectangle (26.92,25.92) (16.08,12.08) rectangle (16.92,12.92) (16.08,19.08) rectangle (16.92,19.92) (8.08,15.08) rectangle (8.92,15.92) (2.08,28.08) rectangle (2.92,28.92) (0.08,9.08) rectangle (0.92,9.92) (4.08,3.08) rectangle (4.92,3.92) (1.08,18.08) rectangle (1.92,18.92) (10.08,23.08) rectangle (10.92,23.92) (26.08,26.08) rectangle (26.92,26.92) (4.08,2.08) rectangle (4.92,2.92) (31.08,30.08) rectangle (31.92,30.92) (31.08,5.08) rectangle (31.92,5.92) (9.08,16.08) rectangle (9.92,16.92) (22.08,10.08) rectangle (22.92,10.92) (12.08,4.08) rectangle (12.92,4.92) (13.08,15.08) rectangle (13.92,15.92) (24.08,1.08) rectangle (24.92,1.92) (8.08,4.08) rectangle (8.92,4.92) (28.08,28.08) rectangle (28.92,28.92) (5.08,7.08) rectangle (5.92,7.92) (8.08,22.08) rectangle (8.92,22.92) (0.08,20.08) rectangle (0.92,20.92) (7.08,3.08) rectangle (7.92,3.92) (26.08,31.08) rectangle (26.92,31.92) (30.08,24.08) rectangle (30.92,24.92) (3.08,13.08) rectangle (3.92,13.92) (7.08,15.08) rectangle (7.92,15.92) (5.08,22.08) rectangle (5.92,22.92) (13.08,31.08) rectangle (13.92,31.92) (28.08,22.08) rectangle (28.92,22.92) (9.08,7.08) rectangle (9.92,7.92) (6.08,16.08) rectangle (6.92,16.92) (17.08,14.08) rectangle (17.92,14.92) (27.08,2.08) rectangle (27.92,2.92) (29.08,25.08) rectangle (29.92,25.92) (2.08,5.08) rectangle (2.92,5.92) (5.08,17.08) rectangle (5.92,17.92) (16.08,21.08) rectangle (16.92,21.92) (0.08,14.08) rectangle (0.92,14.92) (18.08,24.08) rectangle (18.92,24.92) (29.08,22.08) rectangle (29.92,22.92) (2.08,10.08) rectangle (2.92,10.92) (3.08,23.08) rectangle (3.92,23.92) (11.08,15.08) rectangle (11.92,15.92) (8.08,19.08) rectangle (8.92,19.92) (18.08,14.08) rectangle (18.92,14.92) (20.08,4.08) rectangle (20.92,4.92) (3.08,31.08) rectangle (3.92,31.92) (5.08,16.08) rectangle (5.92,16.92) (30.08,21.08) rectangle (30.92,21.92) (1.08,24.08) rectangle (1.92,24.92) (31.08,1.08) rectangle (31.92,1.92) (24.08,3.08) rectangle (24.92,3.92) (15.08,22.08) rectangle (15.92,22.92) (17.08,24.08) rectangle (17.92,24.92) (29.08,7.08) rectangle (29.92,7.92) (12.08,31.08) rectangle (12.92,31.92) (30.08,17.08) rectangle (30.92,17.92) (8.08,31.08) rectangle (8.92,31.92) (29.08,10.08) rectangle (29.92,10.92) (24.08,13.08) rectangle (24.92,13.92) (16.08,15.08) rectangle (16.92,15.92) (10.08,26.08) rectangle (10.92,26.92) (18.08,31.08) rectangle (18.92,31.92);
\path[fill=white] (1.32,12.23) -- (1.78,12.50) -- (1.32,12.77) -- cycle (19.32,0.23) -- (19.78,0.50) -- (19.32,0.77) -- cycle (12.32,23.23) -- (12.78,23.50) -- (12.32,23.77) -- cycle (3.32,26.23) -- (3.78,26.50) -- (3.32,26.77) -- cycle (24.32,15.23) -- (24.78,15.50) -- (24.32,15.77) -- cycle (31.32,20.23) -- (31.78,20.50) -- (31.32,20.77) -- cycle (2.32,17.23) -- (2.78,17.50) -- (2.32,17.77) -- cycle (11.32,28.23) -- (11.78,28.50) -- (11.32,28.77) -- cycle (11.32,29.23) -- (11.78,29.50) -- (11.32,29.77) -- cycle (27.32,4.23) -- (27.78,4.50) -- (27.32,4.77) -- cycle (19.32,1.23) -- (19.78,1.50) -- (19.32,1.77) -- cycle (25.32,30.23) -- (25.78,30.50) -- (25.32,30.77) -- cycle (21.32,11.23) -- (21.78,11.50) -- (21.32,11.77) -- cycle (7.32,31.23) -- (7.78,31.50) -- (7.32,31.77) -- cycle (30.32,18.23) -- (30.78,18.50) -- (30.32,18.77) -- cycle (13.32,19.23) -- (13.78,19.50) -- (13.32,19.77) -- cycle (27.32,31.23) -- (27.78,31.50) -- (27.32,31.77) -- cycle (31.32,21.23) -- (31.78,21.50) -- (31.32,21.77) -- cycle (23.32,21.23) -- (23.78,21.50) -- (23.32,21.77) -- cycle (5.32,19.23) -- (5.78,19.50) -- (5.32,19.77) -- cycle (4.32,22.23) -- (4.78,22.50) -- (4.32,22.77) -- cycle (2.32,7.23) -- (2.78,7.50) -- (2.32,7.77) -- cycle (20.32,14.23) -- (20.78,14.50) -- (20.32,14.77) -- cycle (21.32,1.23) -- (21.78,1.50) -- (21.32,1.77) -- cycle (4.32,28.23) -- (4.78,28.50) -- (4.32,28.77) -- cycle (14.32,28.23) -- (14.78,28.50) -- (14.32,28.77) -- cycle (12.32,3.23) -- (12.78,3.50) -- (12.32,3.77) -- cycle (15.32,0.23) -- (15.78,0.50) -- (15.32,0.77) -- cycle (5.32,11.23) -- (5.78,11.50) -- (5.32,11.77) -- cycle (5.32,24.23) -- (5.78,24.50) -- (5.32,24.77) -- cycle (2.32,4.23) -- (2.78,4.50) -- (2.32,4.77) -- cycle (15.32,14.23) -- (15.78,14.50) -- (15.32,14.77) -- cycle (16.32,16.23) -- (16.78,16.50) -- (16.32,16.77) -- cycle (10.32,28.23) -- (10.78,28.50) -- (10.32,28.77) -- cycle (28.32,30.23) -- (28.78,30.50) -- (28.32,30.77) -- cycle (26.32,25.23) -- (26.78,25.50) -- (26.32,25.77) -- cycle (16.32,12.23) -- (16.78,12.50) -- (16.32,12.77) -- cycle (16.32,19.23) -- (16.78,19.50) -- (16.32,19.77) -- cycle (8.32,15.23) -- (8.78,15.50) -- (8.32,15.77) -- cycle (2.32,28.23) -- (2.78,28.50) -- (2.32,28.77) -- cycle (0.32,9.23) -- (0.78,9.50) -- (0.32,9.77) -- cycle (4.32,3.23) -- (4.78,3.50) -- (4.32,3.77) -- cycle (1.32,18.23) -- (1.78,18.50) -- (1.32,18.77) -- cycle (10.32,23.23) -- (10.78,23.50) -- (10.32,23.77) -- cycle (26.32,26.23) -- (26.78,26.50) -- (26.32,26.77) -- cycle (4.32,2.23) -- (4.78,2.50) -- (4.32,2.77) -- cycle (31.32,30.23) -- (31.78,30.50) -- (31.32,30.77) -- cycle (31.32,5.23) -- (31.78,5.50) -- (31.32,5.77) -- cycle (9.32,16.23) -- (9.78,16.50) -- (9.32,16.77) -- cycle (22.32,10.23) -- (22.78,10.50) -- (22.32,10.77) -- cycle (12.32,4.23) -- (12.78,4.50) -- (12.32,4.77) -- cycle (13.32,15.23) -- (13.78,15.50) -- (13.32,15.77) -- cycle (24.32,1.23) -- (24.78,1.50) -- (24.32,1.77) -- cycle (8.32,4.23) -- (8.78,4.50) -- (8.32,4.77) -- cycle (28.32,28.23) -- (28.78,28.50) -- (28.32,28.77) -- cycle (5.32,7.23) -- (5.78,7.50) -- (5.32,7.77) -- cycle (8.32,22.23) -- (8.78,22.50) -- (8.32,22.77) -- cycle (0.32,20.23) -- (0.78,20.50) -- (0.32,20.77) -- cycle (7.32,3.23) -- (7.78,3.50) -- (7.32,3.77) -- cycle (26.32,31.23) -- (26.78,31.50) -- (26.32,31.77) -- cycle (30.32,24.23) -- (30.78,24.50) -- (30.32,24.77) -- cycle (3.32,13.23) -- (3.78,13.50) -- (3.32,13.77) -- cycle (7.32,15.23) -- (7.78,15.50) -- (7.32,15.77) -- cycle (5.32,22.23) -- (5.78,22.50) -- (5.32,22.77) -- cycle (13.32,31.23) -- (13.78,31.50) -- (13.32,31.77) -- cycle (28.32,22.23) -- (28.78,22.50) -- (28.32,22.77) -- cycle (9.32,7.23) -- (9.78,7.50) -- (9.32,7.77) -- cycle (6.32,16.23) -- (6.78,16.50) -- (6.32,16.77) -- cycle (17.32,14.23) -- (17.78,14.50) -- (17.32,14.77) -- cycle (27.32,2.23) -- (27.78,2.50) -- (27.32,2.77) -- cycle (29.32,25.23) -- (29.78,25.50) -- (29.32,25.77) -- cycle (2.32,5.23) -- (2.78,5.50) -- (2.32,5.77) -- cycle (5.32,17.23) -- (5.78,17.50) -- (5.32,17.77) -- cycle (16.32,21.23) -- (16.78,21.50) -- (16.32,21.77) -- cycle (0.32,14.23) -- (0.78,14.50) -- (0.32,14.77) -- cycle (18.32,24.23) -- (18.78,24.50) -- (18.32,24.77) -- cycle (29.32,22.23) -- (29.78,22.50) -- (29.32,22.77) -- cycle (2.32,10.23) -- (2.78,10.50) -- (2.32,10.77) -- cycle (3.32,23.23) -- (3.78,23.50) -- (3.32,23.77) -- cycle (11.32,15.23) -- (11.78,15.50) -- (11.32,15.77) -- cycle (8.32,19.23) -- (8.78,19.50) -- (8.32,19.77) -- cycle (18.32,14.23) -- (18.78,14.50) -- (18.32,14.77) -- cycle (20.32,4.23) -- (20.78,4.50) -- (20.32,4.77) -- cycle (3.32,31.23) -- (3.78,31.50) -- (3.32,31.77) -- cycle (5.32,16.23) -- (5.78,16.50) -- (5.32,16.77) -- cycle (30.32,21.23) -- (30.78,21.50) -- (30.32,21.77) -- cycle (1.32,24.23) -- (1.78,24.50) -- (1.32,24.77) -- cycle (31.32,1.23) -- (31.78,1.50) -- (31.32,1.77) -- cycle (24.32,3.23) -- (24.78,3.50) -- (24.32,3.77) -- cycle (15.32,22.23) -- (15.78,22.50) -- (15.32,22.77) -- cycle (17.32,24.23) -- (17.78,24.50) -- (17.32,24.77) -- cycle (29.32,7.23) -- (29.78,7.50) -- (29.32,7.77) -- cycle (12.32,31.23) -- (12.78,31.50) -- (12.32,31.77) -- cycle (30.32,17.23) -- (30.78,17.50) -- (30.32,17.77) -- cycle (8.32,31.23) -- (8.78,31.50) -- (8.32,31.77) -- cycle (29.32,10.23) -- (29.78,10.50) -- (29.32,10.77) -- cycle (24.32,13.23) -- (24.78,13.50) -- (24.32,13.77) -- cycle (16.32,15.23) -- (16.78,15.50) -- (16.32,15.77) -- cycle (10.32,26.23) -- (10.78,26.50) -- (10.32,26.77) -- cycle (18.32,31.23) -- (18.78,31.50) -- (18.32,31.77) -- cycle;
\end{tikzpicture}
}\par\vspace{1pt}
\maplabel{\textbf{RANDOM-06}\\100 robots}
\end{minipage}\hfill
\begin{minipage}{0.195\linewidth}\centering
\resizebox{\linewidth}{!}{\input{assets/heldout/random-07-map.tex}}\par\vspace{1pt}
\maplabel{\textbf{RANDOM-07}\\200 robots}
\end{minipage}\hfill
\begin{minipage}{0.195\linewidth}\centering
\resizebox{\linewidth}{!}{\input{assets/heldout/random-08-map.tex}}\par\vspace{1pt}
\maplabel{\textbf{RANDOM-08}\\400 robots}
\end{minipage}\hfill
\begin{minipage}{0.195\linewidth}\centering
\resizebox{\linewidth}{!}{\input{assets/heldout/random-09-map.tex}}\par\vspace{1pt}
\maplabel{\textbf{RANDOM-09}\\700 robots}
\end{minipage}\hfill
\begin{minipage}{0.195\linewidth}\centering
\resizebox{\linewidth}{!}{\input{assets/heldout/random-10-map.tex}}\par\vspace{1pt}
\maplabel{\textbf{RANDOM-10}\\800 robots}
\end{minipage}
\captionof{figure}{The five instances RANDOM-06 to RANDOM-10 at their start.
Gray cells are obstacles, and blue cells hold robots.}
\label{fig:heldout:random}
\end{minipage}
\par\medskip

Table~\ref{tab:heldout:lorr:adaptation} reports PLATM on both layouts over eight paired trials with fresh task streams.
The new maps received map-specific calibration, as it is part of the algorithm.
Their task counts are within 3.9\% of those on the competition layout.

\newcommand{\LorrAdaptationRows}{%
RANDOM-01 & 100 & 736.8 & 10.4 & 724 & 754 \\
RANDOM-02 & 200 & 1,356.9 & 14.6 & 1,332 & 1,380 \\
RANDOM-03 & 400 & 2,647.5 & 31.4 & 2,606 & 2,703 \\
RANDOM-04 & 700 & 2,790.1 & 54.9 & 2,716 & 2,886 \\
RANDOM-05 & 800 & 4,253.3 & 59.8 & 4,170 & 4,331 \\
\midrule
RANDOM-06 & 100 & 746.4 & 10.9 & 724 & 761 \\
RANDOM-07 & 200 & 1,373.0 & 16.7 & 1,345 & 1,400 \\
RANDOM-08 & 400 & 2,561.9 & 50.4 & 2,496 & 2,625 \\
RANDOM-09 & 700 & 2,707.6 & 31.6 & 2,658 & 2,749 \\
RANDOM-10 & 800 & 4,089.4 & 63.9 & 4,008 & 4,205 \\%
}

\begin{table}[H]
\centering
\caption{Tasks completed by PLATM on the competition layout and, after map-specific calibration, on the additional layout, as mean $\pm$ s.d.\ over eight paired trials.
The \mbox{RANDOM-09} and \mbox{RANDOM-10} profiles received extended calibration.}
\label{tab:heldout:lorr:adaptation}
\small
\setlength{\tabcolsep}{4pt}
\begin{tabular}{@{}l r r@{\,\(\pm\)\,}l rr@{}}
\toprule
 & \multicolumn{1}{c}{$n$} & Mean & S.D. & Min & Max \\
\midrule
\LorrAdaptationRows
\bottomrule
\end{tabular}
\end{table}

\subsection{CGAR on WAREHOUSE-00}
\label{app:heldout:warehouse}

We run CGAR unchanged on WAREHOUSE-00, the synthetic layout of Figure~\ref{fig:heldout:warehouse}, with 16 robots and the demand seeds of the main comparison.
Table~\ref{tab:heldout:warehouse} reports throughput per robot: WAREHOUSE-00 reaches $16.98\pm0.78$ orders per hour per robot, on par with the real warehouses of similar fleet size.
All eight runs finish with no logged collisions or assignment failures.

\par\medskip\noindent
\begin{minipage}{\linewidth}
\centering
\includegraphics[width=0.55\linewidth]{assets/cgar/TEST2.pdf}
\captionof{figure}{WAREHOUSE-00, the synthetic layout evaluated here.
Black lanes connect waypoints, green markers denote storage nodes, and orange diamonds denote picking stations.
Blue vehicles illustrate poses and are not the evaluation's initial state.}
\label{fig:heldout:warehouse}
\end{minipage}
\par\medskip

\begin{table}[H]
\centering
\caption{Orders completed per hour per robot by CGAR, as mean $\pm$ s.d.\ over eight trials.
The symbols $n$ and $d$ are those of Table~\ref{tab:cgar:final}.}
\label{tab:heldout:warehouse}
\small
\setlength{\tabcolsep}{4pt}
\begin{tabular}{@{}l rr r@{\,\(\pm\)\,}l rr@{}}
\toprule
 & \multicolumn{1}{c}{$n$} & \multicolumn{1}{c}{$d$} & Mean & S.D. & Min & Max \\
\midrule
WAREHOUSE-01 & 65 & 51.8 & 11.93 & 0.39 & 11.28 & 12.44 \\
WAREHOUSE-02 & 17 & 43.5 & 16.39 & 0.16 & 16.18 & 16.59 \\
WAREHOUSE-03 &  8 & 21.4 & 16.89 & 0.72 & 16.06 & 18.00 \\
WAREHOUSE-04 & 19 & 49.3 & 14.18 & 0.46 & 13.39 & 15.00 \\
WAREHOUSE-05 & 13 & 38.3 & 14.73 & 0.57 & 13.65 & 15.50 \\
WAREHOUSE-06 & 32 & 74.2 &  9.52 & 0.39 &  9.06 & 10.02 \\
\midrule
WAREHOUSE-00 & 16 & 32.2 & 16.98 & 0.78 & 15.88 & 17.97 \\
\bottomrule
\end{tabular}
\end{table}

\end{document}